\theoremstyle{plain}
\newtheorem{theorem}{Theorem}[section]
\newtheorem{lemma}[theorem]{Lemma}
\theoremstyle{definition}
\theoremstyle{remark}
\pgfplotsset{compat=newest}
\newtheoremstyle{bolditalic}%
  {}{}
  {\itshape}{}
  {\bfseries\itshape}{.}
  { }{\thmname{#1}\thmnumber{ #2}\thmnote{ (#3)}}
\theoremstyle{bolditalic}
\newtheorem{thm}{Theorem}
\newtheorem{cor}{Corollary}
\DeclareMathOperator*{\argmin}{arg\,min}
\newcommand{\BarA}{\bm{\Bar{A}}}
\newcommand{\HatA}{\bm{\hat{A}}}
\newcommand{\TildeA}{\bm{\Tilde{A}}}
\newcommand{\TildeD}{\bm{\Tilde{D}}}
\newcommand{\HE}{\text{HE}}%
\newcommand{\HD}{\text{HD}}%
\newcommand{\bfM}{\boldsymbol{M}}
\newcommand{\bm}[1]{\boldsymbol{#1}}
\newcommand{\bb}{\bm{b}}
\newcommand{\be}{\bm{e}}
\newcommand{\bg}{\bm{g}}
\newcommand{\bh}{\bm{h}}
\newcommand{\bq}{\bm{q}}
\newcommand{\br}{\bm{r}}
\newcommand{\bs}{\bm{s}}
\newcommand{\bu}{\bm{u}}
\newcommand{\bv}{\bm{v}}
\newcommand{\bw}{\bm{w}}
\newcommand{\bx}{\bm{x}}
\newcommand{\by}{\bm{y}}
\newcommand{\bz}{\bm{z}}
\newcommand{\bA}{\bm{A}}
\newcommand{\brbA}{\boldsymbol{\breve{A}}}
\newcommand{\brA}{\breve{A}}
\newcommand{\brbQ}{\boldsymbol{\breve{Q}}}
\newcommand{\brQ}{\breve{Q}}
\newcommand{\brbq}{\boldsymbol{\breve{q}}}
\newcommand{\bD}{\bm{D}}
\newcommand{\bH}{\bm{H}}
\newcommand{\bI}{\bm{I}}
\newcommand{\bL}{\bm{L}}
\newcommand{\bM}{\bm{M}}
\newcommand{\bQ}{\bm{Q}}
\newcommand{\bS}{\bm{S}}
\newcommand{\bU}{\bm{U}}
\newcommand{\bV}{\bm{V}}
\newcommand{\bW}{\bm{W}}
\newcommand{\bX}{\bm{X}}
\newcommand{\bY}{\bm{Y}}
\newcommand{\bSigma}{\bm{\Sigma}}
\newcommand{\bLambda}{\bm{\Lambda}}
\newcommand{\bPhi}{\bm{\Phi}}
\newcommand{\btheta}{\bm{\theta}}
\newcommand{\bmu}{\bm{\mu}}
\newcommand{\mbbR}{\mathbb{R}}
\newcommand{\msc}{\mathsf{c}}
\newcommand{\msf}{\mathsf{f}}
\newcommand{\mss}{\mathsf{s}}
\newcommand{\msw}{\mathsf{w}}
\newcommand{\mcE}{\mathcal{E}}
\newcommand{\mcG}{\mathcal{G}}
\newcommand{\mcI}{\mathcal{I}}
\newcommand{\mcJ}{\mathcal{J}}
\newcommand{\mcK}{\mathcal{K}}
\newcommand{\mcL}{\mathcal{L}}
\newcommand{\mcV}{\mathcal{V}}
\newcommand{\tmcV}{\tilde{\mathcal{V}}}
\newcommand{\R}{\mathbb{R}}
\newcommand{\softmax}{\mathrm{softmax}}
\newcommand{\transpose}{^\mathsf{T}}
\definecolor{color1}{rgb}{1,0,0}
\definecolor{color2}{rgb}{0,1,0}
\definecolor{color3}{rgb}{0,0,1}
\definecolor{color4}{rgb}{1,1,0}
\definecolor{color5}{rgb}{1,0,1}
\definecolor{color6}{rgb}{0,1,1}
\definecolor{color7}{rgb}{0,0,0}
\definecolor{color8}{rgb}{0.5,0,0}
\definecolor{color9}{rgb}{0,0.5,0}
\definecolor{color10}{rgb}{0,0,0.5}
\definecolor{color11}{rgb}{0.5,0.5,0}
\definecolor{color12}{rgb}{0,0.5,0.5}
\definecolor{color13}{rgb}{0.5,0.5,0.5}
\DeclareAcronym{gnn}{ 
    short = {GNN}, 
    long  = {graph neural network}
}
\DeclareAcronym{fl}{ 
    short = {FL}, 
    long  = {federated learning}
}
\DeclareAcronym{ml}{
    short = {ML},
    long = {machine learning}
}
\DeclareAcronym{dnn}{
    short = {DNN},
    long = {deep neural network}
}
\DeclareAcronym{sdsfl}{ 
    short = {SDSFL}, 
    long  = {structure decoupled subgraph federated learning}
}
\DeclareAcronym{sfv}{ 
    short = {SFV}, 
    long  = {structure feature vector}
}
\DeclareAcronym{rsfv}{
    short={RSFV},
    long = {random structure feature vector}
}
\DeclareAcronym{sfm}{ 
    short = {SFM}, 
    long  = {structure feature matrix}
}
\DeclareAcronym{sfl}{ 
    short = {SFL}, 
    long  = {subgraph federated learning}
}
\DeclareAcronym{fedavg}{ 
    short = {FedAvg}, 
    long  = {federated averaging}
}
\DeclareAcronym{spm}{ 
    short = {SPM}, 
    long  = {structure predictor model}
}
\DeclareAcronym{fpm}{ 
    short = {FPM}, 
    long  = {feature predictor model}
}
\DeclareAcronym{gdv}{ 
    short = {GDV}, 
    long  = {graphlet degree vector}
}
\DeclareAcronym{nfe}{ 
    short = {NFE}, 
    long  = {node feature embedding}
}
\DeclareAcronym{nse}{ 
    short = {NSE}, 
    long  = {node structure embedding}
}
\DeclareAcronym{sem}{ 
    short = {SEM}, 
    long  = {structure embedding matrix}
}
\DeclareAcronym{fem}{ 
    short = {FEM}, 
    long  = {feature embedding matrix}
}
\DeclareAcronym{sel}{ 
    short = {SEL}, 
    long  = {structure embedding loss}
}
\DeclareAcronym{dgcn}{ 
    short = {decoupled-GCN}, 
    long  = {decoupled-GCN}
}
\DeclareAcronym{mlp}{ 
    short = {MLP}, 
    long  = {multi layer perceptron}
}
\title{Subgraph Federated Learning via Spectral Methods}
\author{%
Javad Aliakbari$^{1}$ \quad Johan \"Ostman$^2$ \quad Ashkan Panahi$^{1}$ \quad Alexandre Graell i Amat$^{1}$\\~\\
$^1$Chalmers University of Technology \quad $^2$AI Sweden  }
\begin{document}

\maketitle

\begin{abstract}
We consider the problem of federated learning (FL) with graph-structured data distributed across multiple clients. In particular, we address the prevalent scenario of interconnected subgraphs, where interconnections between clients significantly influence the learning process. Existing approaches suffer from critical limitations, either requiring the exchange of sensitive node embeddings, thereby posing privacy risks, or relying on computationally-intensive steps, which hinders scalability.
To tackle these challenges, we propose \textsc{FedLap}, a novel framework that leverages global structure information via Laplacian smoothing in the spectral domain to effectively capture inter-node dependencies while ensuring privacy and scalability. We provide a formal analysis of the privacy of \textsc{FedLap}, demonstrating that it preserves privacy. Notably, \textsc{FedLap} is the first subgraph FL scheme with strong privacy guarantees. Extensive experiments on benchmark datasets demonstrate that \textsc{FedLap} achieves competitive or superior utility compared to existing techniques.

\end{abstract}

\section{Introduction}

Graph-structured data naturally arise in a wide variety of real-world scenarios, with nodes representing distinct entities and edges reflecting relationships among them. Illustrative examples include anti-money laundering, social networks, and supply chains.

For graph-structured data, graph neural networks (GNNs) \citep{stokes:2020deep, fan2019graph, jiang:2022graph}
have demonstrated remarkable effectiveness in tasks such as drug discovery, social network analysis, and traffic prediction, by capturing both node and structural information. However, in many real-world scenarios, as in the examples above, graph data is distributed across multiple parties, hindering direct data sharing due to regulatory, privacy, or proprietary considerations. This has led to the emergence of federated learning (FL) \citep{mcmahan:2017}  as a promising paradigm to harness globally distributed graph data while preserving local data privacy. A particularly common setting for graph-structured data is \textbf{Subgraph Federated Learning (SFL)} \citep{zhang_subgraph:2021}, where each client holds a disjoint subgraph of a globally connected graph.

Several SFL methods have been proposed~\citep{zhang_subgraph:2021, peng:2022fedni, chen:2021fedgraph, du:2022federated, lei:2023federated, yao2024fedgcn, baek2023personalized}, but most except \citep{baek2023personalized}  involve sharing node features or learned embeddings, raising \textbf{critical privacy concerns}. Furthermore, attaining robust predictive accuracy under limited information exchange remains challenging.  
This reflects the well-known accuracy–privacy–communication trilemma~\cite{chen2020breaking}, where improving one aspect often comes at the expense of the others. 
More recently,  \citep{aliakbari2024fedstruct} proposed \textsc{FedStruct}, an SFL method that avoids sharing sensitive features by leveraging global graph structure.  Although \textsc{FedStruct} offers stronger privacy than earlier methods (as clients share significantly less information), it still involves sharing partial adjacency matrix information and node structure features, which can potentially leak information. In addition, it \textbf{lacks a formal privacy analysis} and demands considerable communication overhead. 

\textbf{Our Contribution.}  We tackle the challenge of SFL for node classification, where a large graph is partitioned into disjoint subgraphs held by different clients. We adopt the common setting considered in~\cite{aliakbari2024fedstruct, yao2024fedgcn}, where clients know how their subgraphs connect to others, but neither the central server nor any client can access  the internal features or edges of other subgraphs.  This scenario naturally arises in real-world settings\textemdash for example in banking, where a bank records a transaction to a customer at another bank and thus knows the recipient's identifier (e.g., IBAN).  In anti-money laundering applications, the assumption of known interconnections is standard~\cite{BISProjectAurora}. Our contributions push the Pareto frontier of the accuracy–privacy–communication trilemma by enhancing privacy and reducing communication, without compromising predictive performance. Specifically:
\vspace{-1.5ex}
\begin{list}{\labelitemi}{\leftmargin=1em}
  \addtolength{\itemsep}{-0.3\baselineskip}
    \item 
    We propose  \textsc{FedLap}, a  SFL framework that leverages global graph structure information via Laplacian smoothing in the spectral domain to effectively capture inter-node dependencies across subgraphs. {The framework comprises two phases: an offline phase, executed once, in which global graph structure information is exchanged and does not involve any model training, and an online (training) phase that reduces to standard FL, offering higher flexibility than existing methods.}     \textsc{FedLap} achieves \textbf{utility close to a centralized approach while preserving privacy}.

    \item We propose a decentralized version of the Arnoldi iteration for spectral decomposition that \textbf{substantially reduces the computational cost of \textsc{FedLap}, improving efficiency over prior frameworks and  enabling scalability to large, sparse graphs}.  Crucially,  information is exchanged only once before training, and thereafter only model parameters are shared with the server, as in standard FL.

    \item We provide a \textbf{rigorous privacy analysis} of \textsc{FedLap}, demonstrating strong privacy of local subgraph data. \textsc{FedLap} is the first SFL framework with \textbf{formally-supported privacy guarantees}\textemdash unlike existing methods, which lack such  guarantees. 

    \item Through extensive experiments for semi-supervised classification, we show that \textsc{FedLap} achieves performance on par with or surpassing existing SFL methods, with reduced communication overhead, better scalability, and enhanced privacy.  The code is available at this \href{https://github.com/JavadAliakbari/FedLap}{link}.
\end{list}

\section{Related Work}

\textbf{Subgraph federated learning.} Relevant works  include \textsc{FedSage+}~\citep{zhang_subgraph:2021}, \textsc{FedNI}~\citep{peng:2022fedni} , \textsc{FedDep}~\citep{zhang2024deep}, \textsc{FedPub}~\citep{baek2023personalized}, \textsc{FedGCN}~\citep{yao2024fedgcn},
\textsc{FedCog}~\cite{lei:2023federated}, and
 \textsc{FedStruct}~\cite{aliakbari2024fedstruct}. 
\textsc{FedSAGE+}, \textsc{FedNI}, and \textsc{FedDep}  address missing inter-client information by employing inpainting techniques to infer features or embeddings.  However, these methods face a critical trade-off: accurate inpainting exposes sensitive information and undermines privacy, while poor inpainting fails to improve node classification. \textsc{FedPub} avoids inpainting through personalized aggregation strategies, mitigating privacy risks but sacrificing performance due to limited access to global structural information. \textsc{FedGCN}  and \textsc{FedCog} incorporate GNNs via secure aggregation methods to exploit structural information. Yet, \textsc{FedGCN} reveals aggregated node features to neighboring clients and \textsc{FedCog} intermediate embeddings, violating privacy (see \cite{aliakbari2024fedstruct} and \cite{ngo2024secure}). \textsc{FedStruct} stands out as the most privacy-preserving method, while achieving similar or superior performance to \textsc{FedGCN} and \textsc{FedCog}. However, it lacks a formal privacy analysis, and is communication-intensive, limiting its scalability to very large graphs.

\textbf{Structural information in GNNs.} Incorporating structural information into GNNs significantly enhances their representation power \citep{bouritsas2023improving,tan2023federated}.  \citep{bouritsas2023improving} introduces structure-aware aggregation functions that improve expressivity beyond traditional GNNs, while \textsc{FedStar} \citep{tan2023federated} shares explicit structural information in a FL setup to boost local model accuracy. \textsc{FedStruct}~\cite{aliakbari2024fedstruct} is the first work to leverage explicit structural information in SFL to enhance performance while preserving privacy. 

\textbf{Laplacian smoothing.} 
Foundational works \citep{zhou2003learning,belkin2006manifold} highlighted both the theoretical and practical advantages of integrating graph Laplacians into semi-supervised frameworks, emphasizing their role in preserving the underlying data relationships. Modern GNNs \citep{kipf2016semi,hamilton2017inductive} draw inspiration from Laplacian smoothing by employing message-passing mechanisms that aggregate information from neighboring nodes, effectively promoting local smoothness in the learned embeddings. 

\section{Preliminaries and Setup}

\textbf{General notation.}
For a matrix $\bfM \in \R ^{n\times r}$, we denote by $M_{ij}$  its $(i,j)$-th element.  We represent a submatrix of $\bfM$ that is restricted in rows by the set $\mcI$ by $\bfM_{\mcI,:}$ and a submatrix that is restricted in columns by the set $\mcJ$ by $\bfM_{:,\mcJ}$. Hence, $\bfM_{i,:}$ and $\bfM_{:,i}$ denote the $i$-th row and $i$-th column of $\bfM$, respectively. A submatrix of $\bfM$ that is restricted in rows by the set $\mcI$ 
and in columns by the set $\mcJ$ by $\bfM_{\mcI,\mcJ}$. We define $[k] = \{1, \dots, k\}$.

\textbf{Graph notation.}
We consider an undirected graph 
$\mcG=(\mcV, \mcE, \bX,\bY)$,  where
$\mcV = \{1, 2, \dots, n\}$
is the set of $n$ nodes,
$\mcE = \{(u,v)| u,v\in\mcV\}$
the set of $m$ edges, 
$\bX \in \mbbR ^{n\times d}$
 the node feature matrix,  and $\bY \in \mbbR ^{n\times d_{\msc}}$ the label matrix. Let $\bx_v\in \mbbR^{d}$ be the feature vector of node $v$,   $\by_v\in\{0,1\}^{d_{\msc}}$ its  one-hot encoded label vector, 
 and $\tmcV\subseteq \mcV$ the subset of nodes that possess labels.  The  adjacency matrix of graph $\mcG$ is denoted by $\bA \in \R^{n\times n}$, where $A_{uv} = 1$ if $(u,v) \in \mcE$ and $0$ otherwise. We define the diagonal matrix of node degrees as $\bD \in \mbbR^{n\times n}$, where $D_{uu} = \sum_{v}{A_{uv}}$. Also, we denote  by $\TildeA = \bA + \bI$  the self-loop adjacency matrix,  by $\HatA = \TildeD^{-1} \TildeA$  the normalized self-loop adjacency matrix, where $\tilde{D}_{uu} = \sum_{v\in \mcV}\tilde{A}_{uv}$, 
and by   $\BarA = \sum_{l=1}^{L}{\beta_l\HatA^l}$ the $L$-hop \emph{combined} neighborhood adjacency matrix. The elements of $\BarA$ reflect the proximity of two nodes in the graph, with $\beta_l$, $\sum_{l=1}^{L}{\beta_l} = 1$,  determining the contribution of each hop.  The graph Laplacian of $\mcG$ is  $\bL_{\mcG} = \bD - \bA$. 

\textbf{Laplacian smoothing.}
Laplacian smoothing is a graph-based regularization method that encourages similar representations for neighboring nodes via a Laplacian loss term.
Specifically, the total loss can be expressed as
$\mathcal{L} = \mathcal{L}_{\msc} + \lambda_\text{reg} \mathcal{L}_{\text{reg}}$,
where \( \mathcal{L}_{\msc} \) is the supervised loss defined over the labeled part of the graph, \( \lambda_\text{reg} \) is a weighting factor, and \( \mathcal{L}_{\text{reg}} \) is the Laplacian regularization term defined as
 \begin{align*}
 \small
 \mathcal{L}_{\text{reg}} = \sum_{u,v} A_{uv} \| f_{\btheta}(\bx_u) - f_{\btheta}(\bx_v) \|^2= \text{Tr}\left(f_{\btheta}(\bX)\transpose \bL_{\mcG} f_{\btheta}(\bX)\right)
 \end{align*}
Here, $f_{\btheta}(\cdot)$ denotes a neural network-based differentiable function. The regularization term \( \mathcal{L}_{\text{reg}} \) ensures that connected nodes in the graph have similar feature representations, thereby leveraging the graph structure to propagate label information from labeled nodes to unlabeled nodes.

\textbf{Setup.} We consider a scenario where data is structured according to a \emph{global graph} $\mcG=(\mcV, \mcE, \bX,\bY)$, which is  distributed among $K$ clients such that each client owns a smaller \emph{local} subgraph. We denote by  
$\mcG_i=(\mcV_i,\mcV_{i}^{*}, \mcE_i, \mcE_{i}^{*}, \bX_i,\bY_i)$ the subgraph of client $i$, where   $\mcV_i \subseteq \mcV$
is the set of $n_i$ nodes that reside in client $i$, referred to as \emph{internal nodes},  for which client $i$ knows their  features.
$\mcV_i^*$ is the set of nodes that do not reside in client $i$ but have at least one connection to nodes in $\mcV_i$.
We call these nodes \emph{external nodes}.
Importantly, client $i$ does not have access to the features of nodes in $\mcV_i^*$. 
Furthermore, $\mcE_i$ represents the set of edges between nodes owned by client $i$ (intra-connections),
$\mcE_{i}^{*}$ the set of edges between nodes of client $i$ and nodes of other clients (interconnections),
$\bX_i \in \R ^{n_i\times d}$
 the node feature matrix, and
$\boldsymbol{Y}_{i} \in \R ^{n_i\times d_{\msc}}$ 
 the label matrix for the nodes within subgraph $\mcG_i$,  and we denote by $\tmcV_i$ the set of nodes that possess labels.

\textbf{Federated learning.} The FL problem  can be formalized as learning the model parameters that minimize the aggregated loss across clients,
\begin{align}
\label{eq:optimization}
 \btheta^* = \argmin_{\btheta}\;\;\mcL_{\msc}(\btheta)\triangleq \frac{1}{|\tmcV|} \sum_{i=1}^K{\mcL_i(\btheta)}\quad\quad\text{with}\quad\quad 
\mcL_i(\btheta) = \sum_{v \in \tmcV_i}{\mathrm{CE}(\by_{v}, \hat{\by}_{v})}\,,
\end{align}
where $\mathrm{CE}$ is the cross-entropy loss function between the true label $\by_v$ and the predicted label $\hat{\by}_v$.

The model $\bm{\theta}$ is trained iteratively over multiple epochs.  
At each epoch, the clients compute the local gradients $\nabla_{\btheta} \mcL_i(\btheta)$ and send them to the central server.  The server updates the model through gradient descent, $\btheta \leftarrow \btheta - \lambda \nabla_{\btheta} \mcL(\btheta)$,  $\nabla_{\btheta} \mcL(\btheta) = \frac{1}{|\tmcV|}\sum_{i=1}^K{\nabla_{\btheta} \mcL_i(\btheta)}$, and $\lambda$ is the learning rate.

\section{\textsc{FedLap}}
\label{sec:FedLap}

In this section, we introduce the \textbf{\textsc{FedLap} framework} (illustrated in Fig.~\ref{fig:fedlap}), designed to exploit graph structure for enhancing SFL while rigorously addressing privacy and communication challenges.

\textsc{FedLap} builds upon the key insights from \textsc{FedStruct}~\cite{aliakbari2024fedstruct}  (discussed in \cref{sec:fedstruct}), explicitly addressing its main limitations: (i) the need to compute a costly global matrix \( \BarA \in \mathbb{R}^{n \times n} \), significantly increasing communication cost and privacy risks; (ii) optimization of a large structure feature matrix $ \bS \in \mathbb{R}^{n \times d_\mss}$ during training, which demands extensive communication and exposes the gradients of $\bS$ to all clients, thereby increasing privacy leakage; and (iii) absence of formal privacy guarantees.

We resolve these challenges using two complementary strategies:
\vspace{-1.5ex}
\begin{list}{\labelitemi}{\leftmargin=1.5em}
\addtolength{\itemsep}{-0.3\baselineskip}
    \item \textbf{\textsc{FedLap} (Section~\ref{sec:FedLapSub})} 
    employs Laplacian smoothing as a regularizer to implicitly enforce similar structural embeddings among neighboring nodes. This avoids explicitly calculating the costly matrix $\BarA$, thus significantly reducing communication overhead and privacy risks.

    \item \textbf{\textsc{FedLap+} (Section~\ref{ss:Spectral})} addresses the challenge posed by the large structural matrix \( \bS \). It decomposes \( \bS \) into a fixed spectral matrix \( \bU \in \mathbb{R}^{n \times r} \) and a smaller learnable matrix \( \bW \in \mathbb{R}^{r \times d_s} \). Instead of sharing the entire matrix \( \bU \), \textsc{FedLap+} distributes only the relevant rows to corresponding nodes. This efficient distribution is enabled by the spectral representation of the graph Laplacian, which allows truncation to retain only the smoothest eigenvectors. Consequently, this substantially reduces the dimensionality, accelerates convergence, and enhances privacy.
\end{list}

To efficiently compute the partial spectral decomposition in \textsc{FedLap+}, we \textbf{propose a decentralized version of the Arnoldi iteration (Section~\ref{ss:arnoldi_dec}) and Appendix~\ref{sec:DecentralizedArnoldi}}. 
This approach significantly reduces the computational cost of \textsc{FedLap+}, making it more efficient than prior frameworks (Section~\ref{sec:complexity}),  scaling to large, sparse graphs, while preserving privacy (Section~\ref{sec:PrivacyAnalysis}).

Below, we provide a detailed description and a formal analysis of these components. 
\begin{figure}
\vspace{-5ex}
    \centering
    \includegraphics[width=0.85\linewidth]{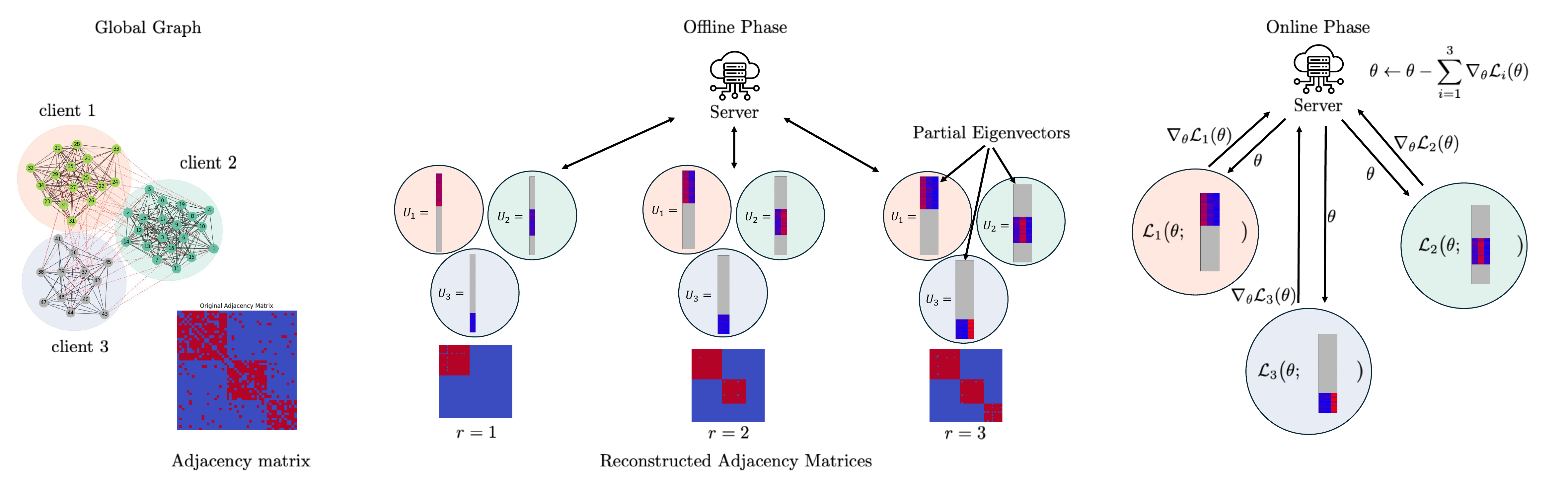}
    \vspace{-1ex}
    \caption{\textsc{FedLap}+ with three clients. Left: the global graph and its partitioning across clients. Center: local refinement of the global eigenvectors obtained via Arnoldi iterations; the corresponding adjacency matrix is shown below. Right: federated learning leveraging the estimated global eigenvectors.}
    \label{fig:fedlap}
\vspace{-3ex}
\end{figure}

\subsection{\textsc{FedLap}:  Exploiting Structural Information in SFL via Laplacian Smoothing}
\label{sec:FedLapSub}

The core idea of \textsc{FedLap} is to leverage structural information through Laplacian smoothing, achieved by incorporating a graph Laplacian regularization term into the loss function in \eqref{eq:optimization}. Specifically, at each client $i\in[K]$, node prediction is performed for a node $v\in\mcV_i$ as
\begin{align}
\label{eq:predlabel}
    &\hat{\by}_{v} 
    = \softmax\left(
    f_{\btheta_{\msf}}(\bX_i,\mcE_i,v)+g_{\btheta_{\mss}}(\bs_v)  
    \right)\,,
\end{align}
where the parameters of the model $\btheta = (\btheta_{\msf}, \btheta_{\mss}, \bS)$ are optimized based on the loss function
\begin{align}
\label{eq:learning_Z}
\mcL(\btheta) = \mcL_{\msc}(\btheta) 
+\lambda_\text{reg}\frac{\text{Tr}(\bS\transpose \bL_{\mcG} \bS)}{\text{Tr}(\bS\transpose \bS)}\,,
\end{align}
with $\bL_{\mcG}$ being the Laplacian matrix of graph $\mcG$, and $\bS$ is generated using \textsc{Hop2Vec} \cite{aliakbari2024fedstruct} (see Appendix~\ref{sec:fedstruct}).

In \eqref{eq:learning_Z},  the Laplacian regularizer is formulated using the Rayleigh Quotient, which normalizes the Laplacian term by the norm of $\bS$. This normalization prevents the undesirable trivial minimization of the regularization term by simply reducing the norm of 
$\bS$.
Equation~\eqref{eq:learning_Z} can be rewritten as
\begin{align}
\label{eq:rayleigh_regularizer}
    \mcL(\btheta) 
    &= \mcL_{\msc}(\btheta) +\lambda_\text{reg}\frac{{\sum_{(u, v) \in \mcE} \Vert\bs_{u} - \bs_{v}\Vert^2}}{{\sum_{v \in \mcV} \Vert\bs_{v}\Vert^2}} \,.
\end{align}
The  regularization term is non-negative and decreases when neighboring nodes have similar NSFs.

The regularization term \eqref{eq:learning_Z}--\eqref{eq:rayleigh_regularizer} 
implicitly captures pairwise relationships between nodes without clients necessitating the knowledge of the whole NSF matrix $\bS$ and the local partition of $\BarA$, as opposed to \textsc{FedStruct}. Specifically,  \cref{eq:rayleigh_regularizer} shows that the Laplacian regularizer can be computed in a decentralized manner, where each client $i$ only requires the NSFs of its internal nodes and external neighbors, i.e., $\{\bs_v, \quad \forall v \in \mcV_i \cup \mcV_i^* \}$. This approach not only enhances privacy compared to \textsc{FedStruct} but also significantly reduces communication overhead.

\textbf{Motivation.} Our motivation for employing Laplacian smoothing in \textsc{FedLap} arises from two critical considerations: 
(i) direct message passing in traditional SFL inherently risks exposing sensitive node and adjacency information, leading to privacy concerns; and 
(ii) graph convolutional networks (GCNs), as shown by Kipf and Welling~\cite{kipf2016semi}, approximate spectral Laplacian smoothing through message passing. 
Hence, adopting Laplacian smoothing enables \textsc{FedLap} to implicitly leverage structural information without explicitly exchanging sensitive data, thus preserving the benefits of message-passing methods while addressing their privacy vulnerabilities in FL contexts.

Sharing NSFs from external nodes $\bs_v\in\mcV_i^*$ may still pose privacy risks, as these features are indirectly tied to the labels through \eqref{eq:predlabel}. Moreover, the high dimensionality of $\bS$ makes its optimization computationally and communication-intensive, requiring multiple rounds of training, which amplifies the risk of information leakage.

To address these challenges and further enhance privacy, in Section~\ref{ss:Spectral} we propose leveraging the Laplacian regularizer in the spectral domain, as detailed in the next subsection. This approach eliminates the need for explicitly sharing $\bs_v\in\mcV_i^*$.

\subsection{\textsc{FedLap+}: Exploiting Structural Information in the Spectral Domain}\label{ss:Spectral}

\textsc{FedLap+} is a spectral-domain variant of \textsc{FedLap}, designed to reduce communication overhead and privacy leakage while maintaining competitive performance. {It decomposes the SFL problem into two distinct phases:
\vspace{-1.5ex}
\begin{list}{\labelitemi}{\leftmargin=1.5em}
\addtolength{\itemsep}{-0.3\baselineskip}
    \item An \textbf{offline phase} consisting of a one-time preprocessing step that precomputes the influence of the global graph structure for each node. This phase involves no model training and privately extracts useful graph-level structural information without revealing node features or labels.
    \item An \textbf{online (training) phase} that does not involve any exchange of information among clients and effectively reduces  to standard FL.
    \end{list}}

The graph Laplacian $\bL_{\mcG}$ is symmetric and positive semi-definite and can be decomposed as
\begin{align}
\label{eq:eigendecomposition}
    \bL_{\mcG} = \bU \bLambda \bU\transpose\,,
\end{align}
where  \( \bU \in \mathbb{R}^{n \times n} = \big[\bu_1, \dots, \bu_n\big] \) is the matrix of orthonormal eigenvectors of \( \bL_\mathcal{G} \) and   \( \bLambda \) is the diagonal matrix of eigenvalues, \( \bLambda_{j,j} = \lambda_j \), with \( \lambda_1  \leq \dots \leq \lambda_n \). Let $\bW = \bU\transpose \bS \in \mathbb{R}^{n\times d_\mss}$ be the spectral representation of matrix $\bS$. Substituting \eqref{eq:eigendecomposition} into \eqref{eq:predlabel} and \eqref{eq:learning_Z} yields
\begin{align}
 \label{eq:Spectral_learning_modela}
    \hat{\by}_{v} 
    &= \softmax\left(f_{\btheta_{\msf}}(\bX_i,\mcE_i,v) + 
    g_{\btheta_{\mss}}(\bU_{v,:} \bW) 
    \right)\,\\
    \mcL(\btheta) &= \mcL_{\msc}(\btheta) +\lambda_\text{reg}\frac{\text{Tr}(\bW\transpose \bLambda \bW)}{\text{Tr}(\bW\transpose \bW)}\,.
\label{eq:spectral_rayleigh}
\end{align}
where $\bU_{v,:}$ is the $v$-th row of $\bU$ and $\btheta = (\btheta_{\msf}, \btheta_{\mss}, \bW)$. 

Leveraging the Laplacian in the spectral domain provides a principled way to truncate $\bW$ and mitigate information exchange.
In particular, since $\bLambda$ is a diagonal matrix, we can simplify \eqref{eq:spectral_rayleigh} as
\begin{align}
    \mcL(\btheta) &= \mcL_{\msc}(\btheta) + \lambda_{\text{reg}}\frac{\sum_{j=1}^{n}{\lambda_{j}\Vert\bw_j\Vert^2}}{\sum_{j=1}^{n}{\Vert\bw_j\Vert^2}}\,,
    \label{eq:rayleigh_spect}
\end{align}
where $\bw_j$ is the $j$-th row of $\bW$. 

\cref{eq:rayleigh_spect} reveals that the Laplacian regularization term \eqref{eq:learning_Z}--\eqref{eq:rayleigh_regularizer} acts as a \textit{low-pass filter} by attenuating high-frequency components while preserving low-frequency (smooth)  components of the graph signal. Specifically, minimizing \eqref{eq:rayleigh_spect} naturally reduces the coefficients 
$\|\bw_j\|$ associated with high-frequency eigenvectors $\bu_j$, which correspond to larger eigenvalues  $\lambda_j$ of the graph Laplacian. This encourages the learned embeddings to align with low-frequency eigenvectors, which capture smooth variations across the graph. These eigenvectors correspond to signals that vary gradually across connected nodes, reflecting regions of high connectivity and structural continuity. As a result, the Laplacian regularization inherently promotes smoothness in the learned embeddings. 
This observation motivates truncating $\bW$ by removing rows corresponding to large eigenvalues, as these represent less smooth\textemdash and consequently less informative\textemdash aspects of the graph structure.

To focus on the most informative spectral components and reduce dimensionality, we retain only the first $r\ll n$ rows of $\bW$, defined as
\begin{align}
    \bW_{[r],:} = \begin{bmatrix}\bw_1\transpose, \dots, \bw_r\transpose\end{bmatrix}\transpose \quad \in \R^{r \times d_{\mss}}\,.
\end{align}
Similarly, we truncate the corresponding columns of $\bU$ and the diagonal elements of $\bLambda$:
\begin{align}
    \bU_{:,[r]} = \big[\bu_1, \bu_2, \dots, \bu_r\big] \quad \in \R^{n \times r}\,, \quad
    \bLambda_{[r],[r]} = 
    \text{diag}(\lambda_1, \dots, \lambda_r)
  \quad \in \R^{r \times r}\,.
\end{align}
With this truncation, the graph Laplacian $\bL_{\mcG}$ (see \eqref{eq:eigendecomposition}) can be approximated as
\begin{align}
\label{eq:ApproximationLaplacian}
    \bL_{\mcG} &\approx \bU_{:,[r]} \bLambda_{[r],[r]} \bU_{:,[r]}\transpose\,.
\end{align}

To obtain a good approximation of the Laplacian $\bL_\mcG$, $r$ should be chosen on the order of its rank, i.e., the number of communities in  $\mcG$, which is  much smaller than $n$.  In Fig.~\ref{fig:matrix_comparison}, we apply spectral truncation to the Cora dataset ($2708$ nodes) and compare the reconstructed adjacency matrix (Fig.\ref{fig:matrix_comparison}(b)) with the original (Fig.~\ref{fig:matrix_comparison}(a)). As shown, the global structure is preserved, yielding a smoother, low-pass version of the graph.
\begin{figure*}[t]
    \centering
    \begin{subfigure}{0.23\textwidth}
        \centering
        \includegraphics[width=\linewidth]{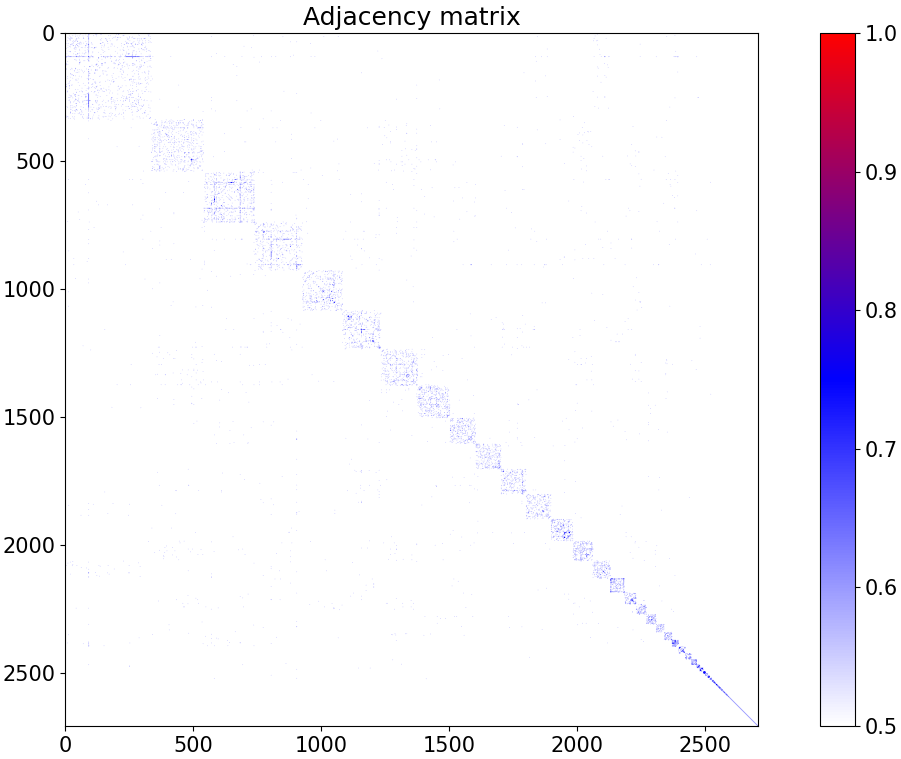}%
        \caption{Adjacency matrix.}
        \label{fig:sub1}
    \end{subfigure}
    \hfill
    \begin{subfigure}{0.23\textwidth}
        \centering
        \includegraphics[width=\linewidth]{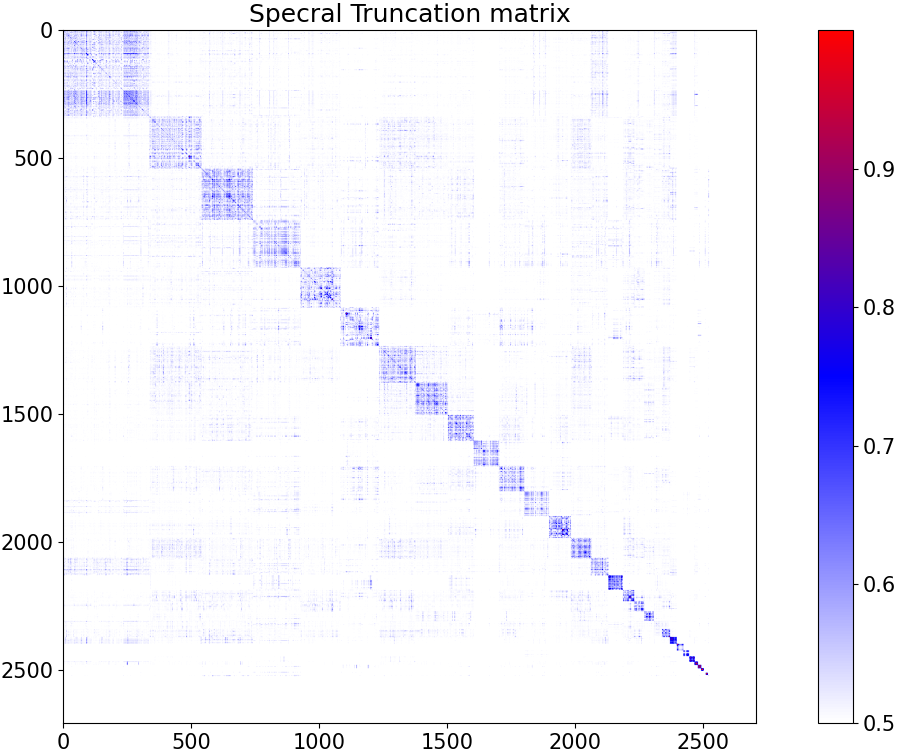}%
        \caption{Spectral trunc. matrix.}
        \label{fig:sub2}
    \end{subfigure}
    \hfill
    \begin{subfigure}{0.23\textwidth}
        \centering
        \includegraphics[width=\linewidth]{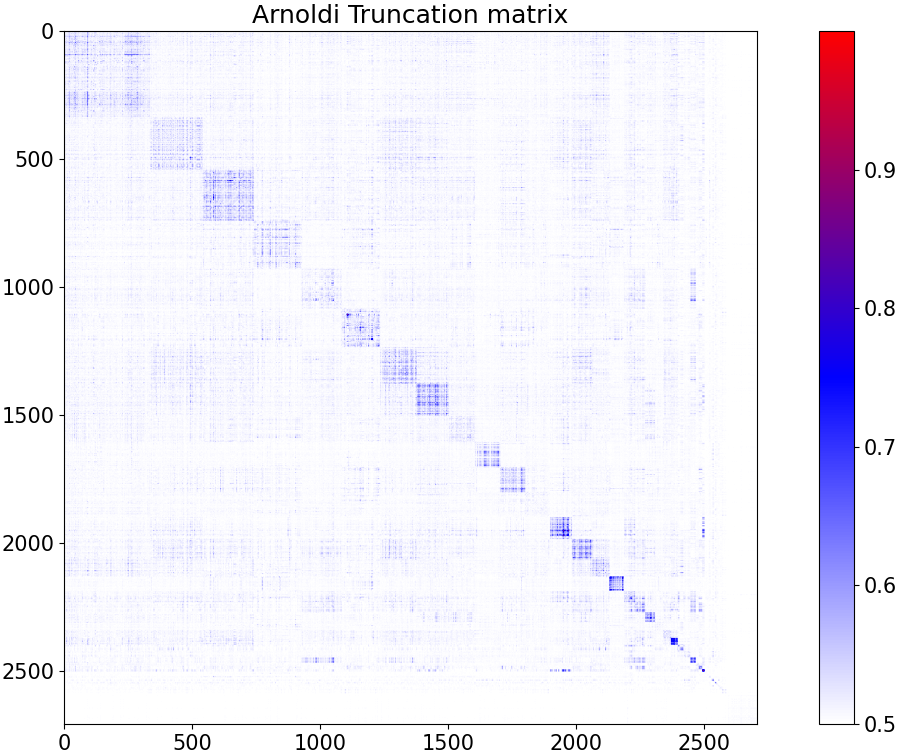}%
        \caption{Arnoldi trunc. matrix.}
        \label{fig:sub2}
    \end{subfigure}
    \caption{Comparison of different matrix representations in the graph. 
    In (b) and (c),  \( r = 100 \) for dimensionality reduction.}
    \label{fig:matrix_comparison}
     \vspace{-3ex}
\end{figure*}

The truncation of spectral components not only drastically reduces communication overhead (thereby improving privacy), but also serves as an additional form of regularization, preventing the model from overfitting to noise or irrelevant details in the graph structure. This is particularly important in FL settings, where models must generalize well across different subgraphs from multiple clients.

{Note that \textsc{FedLap+} inherits the standard convergence guarantees of \textsc{FedAvg} (see Appendix~\ref{app:fedlap_convergence}).}

\subsection{Decentralized Arnoldi Iteration: Privacy-Preserving Approximation of the  Laplacian}\label{ss:arnoldi_dec}

\textsc{FedLap+} requires the eigendecomposition of the Laplacian $\bL_\mcG$. While a full decomposition has a complexity of \( \mathcal{O}(n^3) \) and is prohibitively expensive in decentralized settings, as discussed in \cref{ss:Spectral}, \textsc{FedLap+} only requires the first \( r \) eigenvectors associated with the smallest eigenvalues. To compute these efficiently and in a privacy-preserving manner, we propose a decentralized version of the  Arnoldi iteration~\citep{lehoucq1996deflation}, particularly well-suited for large, sparse graphs. As detailed in Appendix~\ref{app:arnoldi_computation}, its {\textbf{complexity  is \( \mathcal{O}(nr^2) \), i.e.,  linear in \( n \)}}, under typical sparsity assumptions.

The Arnoldi iteration is an efficient iterative method for approximating eigenvalues and eigenvectors of large, sparse matrices. Rather than performing a full (and potentially very costly) eigendecomposition, Arnoldi constructs an orthonormal basis for the so-called Krylov subspace $\mcK_m(\bM,\bx) = \text{span}\{\bx, \bM\bx, \ldots, \bM^{m-1}\bx\}$, where $\bx$ is some chosen starting vector. 
Specifically, it computes  an orthonormal basis \( \{\bq_1, \dots, \bq_m\} \) for the subspace \( \mathcal{K}_m(\bM,\bx) \) iteratively and yields an approximate eigendecomposition of $\bM$ as
\begin{align}
\bM \approx \bU \bSigma \bU\transpose,
    \label{eq:eigen_approx}
\end{align}
where \( \bU = \bQ_m \bV \) and \( \bU\transpose \bU \approx \bI \), with  \( \bQ_m = [\bq_1, \dots, \bq_m] \) being the matrix of Arnoldi basis vectors, and $\bV$ and $\bSigma$ the matrix of eigenvectors and eigenvalues, respectively, of  an upper Hessenberg matrix \( \bH_m \in \R^{m \times m} \) with entries \( h_{ij} = \bq_i\transpose \bM \bq_j \). For details, we refer the reader to Appendix~\ref{app:arnoldi_details}.

We use the Arnoldi iteration to approximate the eigenvalues and eigenvectors of $\bL_\mcG$. Crucially, the Arnoldi iteration  relies only on matrix-vector multiplication. As shown
in Section~\ref{sec:PrivacyAnalysis}, this enables a decentralized, privacy-preserving implementation that does not disclose clients’ node structures.
{In particular, given the Krylov subspace $\mcK_m(\bL_\mcG,\bv)$, the Arnoldi update becomes (see \eqref{eq:arnoldi_eq1} in Appendix~\ref{app:arnoldi_details})
\begin{align}
\label{eq:arnoldi_eq1Lg}
\br_\ell = \bL_\mcG\bq_\ell - \sum_{i=1}^\ell  h_{i,\ell}\bq_i,\quad
h_{i,\ell}=\bq_i\transpose \bL_\mcG\bq_\ell, \quad
\bq_{\ell+1} = \frac{\br_\ell}{\|\br_\ell\|}\,.
\end{align}
More compactly, if we stack the first $r$ Arnoldi vectors in $\bQ_r=[\bq_1,\dots,\bq_r]$ and let $\bH_r\in\R^{r\times r}$ collect the coefficients $h_{ij}=\bq_i^\top\bL_\mcG\bq_j$, we obtain the Arnoldi relation
\begin{align}
\label{eq:arnoldi_relation_main}
    \bL_\mcG \bQ_r \;=\; \bQ_r \bH_r \;+\; h_{r+1,r}\,\bq_{r+1}\,\be_r^\top\,,
\end{align}
where $\be_r$ is the $r$-th standard basis vector. A small residual $h_{r+1,r}$  implies the rank-$r$ approximation
\begin{align}
    \bL_\mcG \;\approx\; \bQ_r \bH_r \bQ_r^\top \;=\; \bQ_r \bV_r \bSigma_r \bV_r^\top \bQ_r^\top,
\end{align}
where $\bV_r \bSigma_r \bV_r^\top$ is the eigendecomposition of $\bH_r$. Defining $\bU_{:,[r]} \triangleq \bQ_r \bV_r$ and $\bLambda_{[r],[r]} \triangleq \bSigma_r$ recovers the truncated Laplacian approximation in~\eqref{eq:ApproximationLaplacian}.}

\textbf{Proposed decentralized Arnoldi iteration.} We aim to use Arnoldi  to estimate the smallest $r$ eigenvalues of $\bL_{\mcG}$ and  corresponding eigenvectors in a decentralized manner across clients while preserving privacy. For a generic vector $\bq$, we define $\bb = \bL_\mathcal{G}\bq$.
As each client $i$ knows the rows and columns of the adjacency matrix indexed by $\mcV_i$, i.e., $\bA_{\mcV_i,:}$ and $\bA_{:,\mcV_i}$ (and thus also $\bD_{\mcV_i,:}$ and $\bD_{:,\mcV_i}$), client $i$ needs to obtain its local block of $\bb=\bL_\mcG\bq$, namely $\bb_{\mcV_i}$. This block can be written as
\vspace{-2ex}
\begin{align}
\label{eq:decentralized_arnoldi_main}
\bb_{\mathcal{V}_i}
\;=\;
\bD_{\mathcal{V}_i,\mathcal{V}_i}\,\bq_{\mathcal{V}_i}
\;-\;
\sum_{j=1}^K \bA_{\mathcal{V}_i,\mathcal{V}_j}\,\bq_{\mathcal{V}_j}.
\end{align}
The first term is computable using only local information, whereas the second term requires collaboration across clients. To preserve privacy, so that no party learns any part of the global adjacency beyond its own, each client $j$ computes the local product $\bA_{\mathcal{V}_i,\mathcal{V}_j}\bq_{\mathcal{V}_j}$ and sends an \emph{additively homomorphically} encrypted ciphertext to the server. The server sums these ciphertexts over all $j\in[K]$ and returns the encrypted aggregate to client $i$, who decrypts it to obtain $\sum_{j=1}^K \bA_{\mathcal{V}_i,\mathcal{V}_j}\bq_{\mathcal{V}_j}$. In this way, the server never accesses individual contributions in plaintext, and client $i$ learns only the required sum. Protocol details are in Appendix~\ref{sec:DecentralizedArnoldi}, and the privacy analysis is given in Section~\ref{sec:PrivacyAnalysis} and Appendix~\ref{app:PrivacyAnalysis}.

\section{Privacy Analysis of \textsc{FedLap+}}
\label{sec:PrivacyAnalysis}

{In this section, we analyze the privacy of \textsc{FedLap+}. We show that, under a strong attacker model, clients cannot infer other clients’ internal connections or cross-client connections, i.e., \textsc{FedLap+} provides strong  privacy.} 

{As mentioned earlier, \textsc{FedLap+} is divided into an \textbf{offline} and an \textbf{online} phase. In the \textbf{online phase}, clients federate the model parameters $\btheta = (\btheta_{\msf}, \btheta_{\mss}, \bW)$ via an arbitrary FL scheme, e.g., \textsc{FedAvg}~\cite{mcmahan:2017}.
Hence, the online phase of \textsc{FedLap+} exhibits the same kind of vulnerabilities as FL and is amenable to privacy enhancing techniques like differential privacy, homomorphic encryption, and secure aggregation~\cite{bonawitz2017practical}.} In the \textbf{offline phase}, executed once before training, no node features or labels are shared; only information related to the graph structure is exchanged. The goal is to extract a compact structural summary while preserving privacy. As previously explained, we operate in the spectral domain and use a decentralized Arnoldi procedure to estimate a small set of Laplacian eigenvectors, leveraging the empirical fact that most interconnection signals lie in low-frequency components.

{Under this decomposition, any \emph{additional} privacy considerations specific to \textsc{FedLap+} are confined to the \emph{offline} phase, as the online phase introduces no leakage beyond  standard FL. In the offline phase, since no features or labels leave a client, the only potential leakage channel pertains to \emph{edges}. We thus focus on structural privacy and cast the attack as a \textbf{membership-inference attack} on edges: given the offline messages, can an adversary determine whether a specific connection $A_{uv}$ ``participated'' in the Arnoldi computations? This results in a binary hypothesis test based on a log-likelihood ratio (LLR). By the Neyman-Pearson lemma, the LLR test is the optimal decision rule for this setting.}
\vspace{-1ex}
{
\paragraph{Attacker observations and procedure.}
For the analysis, we consider a worst-case scenario involving two clients: client~1 (target) and client~2 (attacker). The attacker aims to infer whether an edge exists between two nodes \(u,v\in\mathcal{V}_1\) (test \(H_0: A_{uv}=0\) vs.\ \(H_1: A_{uv}=1\)). From the decentralized Arnoldi updates (see \eqref{eq:decentralized_arnoldi_main}) the attacker obtains, for the target client, the aggregated vector $
\boldsymbol{\tau}_{\mathcal{V}_2}=\bA_{\mathcal{V}_2,\mathcal{V}_1}\,\bq_{\mathcal{V}_1}$, 
and  also  knows the adjacency blocks \(\bA_{\mathcal{V}_2,\mathcal{V}_1}\). Since \(\boldsymbol{\tau}_{\mathcal{V}_2}\) comprises \(n_2\) linear equations in the \(n - n_2\) unknown spectral blocks \(\bq_{\mathcal{V}_1}\), the attacker can only form an \emph{estimate} $\brbq_{\mathcal{V}_1}$ of the true spectral basis. We assume
$
\|\brbQ-\bQ_{\mathcal{V}_1,:}\| \leq\sigma,
$
where $\brbQ$ is the estimate of $\bQ_{\mathcal{V}_1,:}$ and \(\sigma\) quantifies the attacker’s uncertainty.
Using the public \(\bH_r\) and its spectral estimate \(\brbQ\), and invoking the Arnoldi relation \eqref{eq:arnoldi_relation_main}, the attacker creates the equation
\begin{align}
\label{eq:attack_model}
\bU \approx \brbA\,\brbQ,
\end{align}
where $\bU \;\triangleq\; \bD_{\mathcal{V}_1,\mathcal{V}_1} \brbQ 
 + \bA_{\mathcal{V}_1,\mathcal{V}_2} \bQ_{\mathcal{V}_2,:} - \brbQ \bH_r$ and \(\brbA=\bA_{\mathcal{V}_1,\mathcal{V}_1}\). Equality in \eqref{eq:attack_model} holds only when $\sigma = 0$ and $h_{r+1,r} = 0$. The attacker must also know $\bD_{\mathcal{V}_1,\mathcal{V}_1}$ to calculate $\bU$. The attacker then performs the following steps: (i) obtain \(\bU\), (ii) evaluate the log-likelihood ratio \(\mathrm{LLR}_{u,v}\) for the two hypotheses using \(\bU\), and (iii) decide \(H_1\) whenever \(\mathrm{LLR}_{u,v}\ge\gamma\) for some threshold \(\gamma\in\mathbb{R}\).}

{We note that the following analysis adopts a deliberately conservative perspective. In line with standard practice in privacy research (e.g., secure aggregation and spectral privacy frameworks), we assume an unrealistically strong attacker to obtain a worst-case privacy guarantee. 
}

\begin{thm}\label{thm:posterior}
Consider two clients running the decentralized Arnoldi scheme outlined in Sec.~\ref{ss:arnoldi_dec}. 
{Let $\bA$ be a random graph with $p$ denoting the probability of a connection between any pair $(u,v)$ for $u, v \in \mathcal{V}_1$. Assume $p$ to be known by client $2$.}
Let $\bU=\brbA\brbQ$, where $\brbA=\bA_{\mcV_1,\mcV_1}$  and {$\brbQ\approx\bQ_{\mathcal{V}_1,:}$ 
} are client $2$'s observations (provided by client $1$) about the sensitive low-rank matrix $\brbA$. 
Moreover, let $\brbQ$ have delocalized entries  and be known to client $2$.
For large $n$, the LLR
\begin{equation}
    \mathrm{LLR}_{u,v}= \log\left(\frac{P(\bU \vert \breve{A}_{uv}=1)}{P(\bU  \vert \breve{A}_{uv}=0)}\right) ,
\end{equation}
is a random variable with the distribution
\begin{align}
     H_1: \quad \mathrm{LLR}_{u,v} \sim \mathcal{N}\left(\frac{1}{2} \alpha_v, \alpha_v\right) , \quad
     H_0: \quad \mathrm{LLR}_{u,v} \sim \mathcal{N}\left(-\frac{1}{2} \alpha_v, \alpha_v\right)
\end{align}

where $\alpha_v = \brbQ_{v,:} \bSigma^{-1}{\brbQ}_{v,:}\transpose$ and $\bSigma=p(1-p)\brbQ\transpose\brbQ$.
\end{thm}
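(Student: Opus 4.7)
The plan is to reduce the problem to a classical binary Gaussian hypothesis test with common covariance, for which the log-likelihood ratio is an affine functional of a Gaussian random vector whose distribution can be read off from the conditional mean shift and the common covariance. The Gaussian structure itself comes from a central limit theorem (CLT) applied to $\bU = \brbA\,\brbQ$: each row of $\bU$ is a sum of order $n$ independent Bernoulli-weighted copies of rows of $\brbQ$, and delocalization of $\brbQ$ supplies the Lindeberg condition.

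Concretely, I would first condition on the test bit $\breve{A}_{uv} = c \in \{0,1\}$ and decompose
\begin{equation*}
\bU_{u,:} \;=\; c\,\brbQ_{v,:} \;+\; \sum_{k \neq u,v} \breve{A}_{uk}\,\brbQ_{k,:},
\end{equation*}
where $\{\breve{A}_{uk}\}_{k\neq u,v}$ are i.i.d.\ Bernoulli$(p)$. The CLT then gives, for large $n$, $\bU_{u,:} \mid c \;\approx\; \mcN(\bmu_c, \bSigma_c)$ with
\begin{equation*}
\bmu_c \;=\; c\,\brbQ_{v,:}\transpose + p\!\!\sum_{k \neq u,v}\brbQ_{k,:}\transpose, \qquad \bSigma_c \;=\; p(1-p)\!\!\sum_{k \neq u,v}\brbQ_{k,:}\transpose \brbQ_{k,:}.
\end{equation*}
Note that $\bSigma_c$ does not depend on $c$ and, up to a rank-two $O(1/n)$ correction (legitimate under delocalization), equals the nominal covariance $\bSigma = p(1-p)\,\brbQ\transpose\brbQ$ stated in the theorem. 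Hence the two conditional laws share the same covariance $\bSigma$ and differ only through the mean shift $\bmu_1 - \bmu_0 = \brbQ_{v,:}\transpose$.

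With a common covariance, the standard Gaussian LLR formula gives
\begin{equation*}
\mathrm{LLR}_{u,v} \;=\; (\bmu_1 - \bmu_0)\transpose \bSigma^{-1}\!\left(\bU_{u,:} - \tfrac{1}{2}(\bmu_0 + \bmu_1)\right),
\end{equation*}
which is affine in a Gaussian vector, hence itself Gaussian. Taking conditional expectations yields $\E_{H_1}[\mathrm{LLR}_{u,v}] = \tfrac{1}{2}(\bmu_1 - \bmu_0)\transpose \bSigma^{-1}(\bmu_1 - \bmu_0) = \tfrac{1}{2}\alpha_v$ and the opposite sign under $H_0$, while in both cases the variance equals $(\bmu_1 - \bmu_0)\transpose \bSigma^{-1}(\bmu_1 - \bmu_0) = \alpha_v$, recovering the announced $\mcN(\pm \alpha_v/2,\,\alpha_v)$ laws.

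The main obstacle is making the Gaussian approximation uniform and handling the symmetry $\breve{A}_{uv} = \breve{A}_{vu}$, which couples rows $u$ and $v$ of $\bU$ through the very bit under test. Concretely, one must (i) verify the Lindeberg condition uniformly using delocalization of $\brbQ$; (ii) show that replacing the conditional covariance $\bSigma_c$ by the nominal $\bSigma$ only perturbs the LLR moments by $O(1/n)$, which is subleading in the large-$n$ limit; and (iii) justify restricting attention to a single row of $\bU$, absorbing the additional symmetric information in row $v$ into the single-row analysis that yields the stated $\alpha_v = \brbQ_{v,:}\bSigma^{-1}\brbQ_{v,:}\transpose$ rather than a two-row variant. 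The invertibility of $\bSigma$, guaranteed when the truncated basis $\brbQ$ has full column rank $r \ll n$, is what keeps the final quadratic form well defined.
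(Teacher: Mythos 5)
Your proposal is correct and follows essentially the same route as the paper's proof: apply a Lindeberg-type CLT to the row $\bU_{u,:}=\sum_k \breve{A}_{uk}\brbQ_{k,:}$ using the delocalization $\|\brbQ_{k,:}\|^2=\mathcal{O}(1/n)$, observe that the two hypotheses give Gaussians with common covariance $\bSigma=p(1-p)\brbQ\transpose\brbQ$ and mean shift $\brbQ_{v,:}$, and read off the LLR as an affine functional of a Gaussian, yielding $\mathcal{N}(\pm\alpha_v/2,\alpha_v)$. Your extra remarks on the rank-two $O(1/n)$ covariance correction and on restricting to the single row $u$ are refinements of steps the paper handles implicitly (it simply asserts that $\breve{A}_{uv}$ affects only row $u$ and uses the full $\bSigma$), so no substantive difference in approach.
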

\vspace{-2.5ex}
\begin{proof}
    See Appendix~\ref{app:privacy_analysis}.
\end{proof}
Theorem~\ref{thm:posterior} provides insights into how different parameters influence the attack performance, as shown in Corollary~\ref{cor:KL}.
\begin{cor}\label{cor:KL}
    Consider the same setting as in Theorem~\ref{thm:posterior}.
    If $\brbQ\transpose\brbQ \approx (r/n) \bI_r$
    it follows that
    \begin{align*}
    D_{\mathrm{KL}}\left(\mathrm{Pr}(\mathrm{LLR}_{u,v} \mid H_1) \,\big\|\ \mathrm{Pr}(\mathrm{LLR}_{u,v} \mid H_0)\right)
        \approx 
        \frac{r}{2np(1-p)}.
    \end{align*}
\end{cor}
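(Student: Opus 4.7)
The plan is to combine the closed-form KL divergence between the two equal-variance Gaussians of Theorem~\ref{thm:posterior} with the spectral isotropy and delocalization hypotheses on $\brbQ$. The argument reduces essentially to evaluating a single Rayleigh-type quadratic form and proceeds in three short steps.

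First, I would reduce the claim to an estimate of $\alpha_v$. For any two univariate normals $\mathcal{N}(\mu_1,\sigma^2)$ and $\mathcal{N}(\mu_0,\sigma^2)$ sharing a common variance, the standard identity $\KL(\mathcal{N}(\mu_1,\sigma^2)\,\Vert\,\mathcal{N}(\mu_0,\sigma^2)) = (\mu_1-\mu_0)^2/(2\sigma^2)$ holds. Inserting $\mu_1-\mu_0 = \alpha_v$ and $\sigma^2 = \alpha_v$ from Theorem~\ref{thm:posterior} gives $\KL = \alpha_v/2$, so the entire task reduces to showing $\alpha_v \approx r/(np(1-p))$.

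Next, I would invert $\bSigma$ using the corollary's hypothesis. Since $\bSigma = p(1-p)\,\brbQ\transpose\brbQ \approx p(1-p)(r/n)\,\bI_r$, one obtains $\bSigma^{-1} \approx \tfrac{n}{r\,p(1-p)}\,\bI_r$, and hence $\alpha_v = \brbQ_{v,:}\,\bSigma^{-1}\,\brbQ_{v,:}\transpose \approx \tfrac{n}{r\,p(1-p)}\,\Vert\brbQ_{v,:}\Vert^2$. Finally, I would invoke the delocalization of the Arnoldi basis (the standing assumption of Theorem~\ref{thm:posterior}): combined with the trace identity $\mathrm{tr}(\brbQ\transpose\brbQ) = r^2/n$, delocalization implies that the row-wise contributions to $\Vert\brbQ\Vert_F^2$ are approximately uniform, so a typical row satisfies $\Vert\brbQ_{v,:}\Vert^2 \approx r^2/n^2$. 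Substituting back yields $\alpha_v \approx r/(np(1-p))$ and therefore $\KL \approx r/(2np(1-p))$, as claimed.

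The main obstacle, though a minor one, is the last step: passing from the column-level isotropy $\brbQ\transpose\brbQ \approx (r/n)\bI_r$ to a per-row norm estimate. This relies entirely on the delocalization property of the Arnoldi basis, which guarantees that the Frobenius mass of $\brbQ$ is spread approximately evenly across its rows rather than concentrated on a few. The first two steps are purely algebraic and follow immediately from the hypothesis and Theorem~\ref{thm:posterior}.
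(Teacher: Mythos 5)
Your proposal is correct and follows essentially the same route as the paper: reduce the KL divergence of the two equal-variance Gaussians from Theorem~\ref{thm:posterior} to $\alpha_v/2$, then evaluate $\alpha_v \approx r/(np(1-p))$ by inverting $\bSigma$ under the isotropy hypothesis and using delocalization to estimate the row norm $\Vert\brbQ_{v,:}\Vert^2$. The only (immaterial) difference is the normalization: the paper's appendix works with $\brbQ\transpose\brbQ\approx\bI_r$ and $\Vert\brbQ_{v,:}\Vert^2\approx r/n$, whereas you use the corollary's stated scaling $\brbQ\transpose\brbQ\approx (r/n)\bI_r$ with $\Vert\brbQ_{v,:}\Vert^2\approx r^2/n^2$; since $\alpha_v$ is invariant under rescaling of $\brbQ$, both conventions yield the same result, and yours is in fact the more faithful reading of the hypothesis as stated.
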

\vspace{-2.5ex}
\begin{proof}
    See Appendix~\ref{app:cor_KL}
\end{proof}
The KL divergence in Corollary~\ref{cor:KL} quantifies the discrepancy of the LLR distributions under the two hypotheses; a lower value makes it harder for the adversary to distinguish between them.
This implies that a larger $r$, i.e., increased shared information between clients, and a smaller $p$, i.e., sparser graphs, negatively impact privacy, whereas a greater number of nodes in the graph, $n$, has a beneficial effect.

\begin{figure*}[t!]
    \centering
    \begin{tikzpicture}
    \begin{groupplot}[
        group style={
            group size=3 by 1, 
            horizontal sep=30pt,
            vertical sep=10pt
        },
        width=0.3\textwidth, 
        grid = both,
        xlabel= $\gamma$,
        xlabel style={yshift=1mm},
        ylabel= $P+R$,
        legend style={
            legend columns=1, 
            font=\scriptsize, 
        },
        xmin=-10, xmax=10, 
        ymin=0, ymax=1.2, 
        every axis plot/.append style={
            line width=0.4pt, 
        },
        tick label style={font=\scriptsize}, 
        label style={font=\scriptsize} 
    ]

\nextgroupplot[
    legend to name=sharedlegend
]

\addplot+[mark=none,] table[x=Threshold, y=TPR_Precision_r_50, col sep=comma, header=true] {./Figures/data/TPR_precision_data_chameleon.csv};
\addlegendentry{$r = 50$}
\addplot+[mark=none,] table[x=Threshold, y=TPR_Precision_r_100, col sep=comma, header=true] {./Figures/data/TPR_precision_data_chameleon.csv};
\addlegendentry{$r = 100$}

\addplot+[mark=none,] table[x=Threshold, y=TPR_Precision_r_150, col sep=comma, header=true] {./Figures/data/TPR_precision_data_chameleon.csv};
\addlegendentry{$r = 150$}

\addplot+[mark=none,] table[x=Threshold, y=TPR_Precision_r_200, col sep=comma, header=true] {./Figures/data/TPR_precision_data_chameleon.csv};
\addlegendentry{$r = 200$}

\addplot+[mark=none, dashed, color=red] table[x=Threshold, y=TPR_r_100, col sep=comma, header=true] {./Figures/data/TPR_precision_data_chameleon.csv};    
\addlegendentry{$R \, (r = 100)$}

\addplot+[mark=none, dash dot, color=red] table[x=Threshold, y=Precision_r_100, col sep=comma, header=true] {./Figures/data/TPR_precision_data_chameleon.csv};    
\addlegendentry{$P \, (r = 100)$}

    
    \nextgroupplot[]
    
    \addplot+[mark=none,] table[x=Threshold, y=TPR_Precision_r_50, col sep=comma, header=true] {./Figures/data/TPR_precision_data_Photo.csv};

    \addplot+[mark=none,] table[x=Threshold, y=TPR_Precision_r_100, col sep=comma, header=true] {./Figures/data/TPR_precision_data_Photo.csv};

    \addplot+[mark=none,] table[x=Threshold, y=TPR_Precision_r_150, col sep=comma, header=true] {./Figures/data/TPR_precision_data_Photo.csv};

    \addplot+[mark=none,] table[x=Threshold, y=TPR_Precision_r_200, col sep=comma, header=true] {./Figures/data/TPR_precision_data_Photo.csv};


    \addplot+[mark=none,dashed, color=red] table[x=Threshold, y=TPR_r_100, col sep=comma, header=true] {./Figures/data/TPR_precision_data_Photo.csv};    

    \addplot+[mark=none,dash dot, color=red] table[x=Threshold, y=Precision_r_100, col sep=comma, header=true] {./Figures/data/TPR_precision_data_Photo.csv};

    \nextgroupplot[xmin=-5, xmax=15, ymax=2]
    
    \addplot+[mark=none,] table[x=Threshold, y=TPR_Precision_r_50, col sep=comma, header=true] {./Figures/data/TPR_precision_data_PubMed.csv};

    \addplot+[mark=none,] table[x=Threshold, y=TPR_Precision_r_100, col sep=comma, header=true] {./Figures/data/TPR_precision_data_PubMed.csv};

    \addplot+[mark=none,] table[x=Threshold, y=TPR_Precision_r_150, col sep=comma, header=true] {./Figures/data/TPR_precision_data_PubMed.csv};

    \addplot+[mark=none,] table[x=Threshold, y=TPR_Precision_r_200, col sep=comma, header=true] {./Figures/data/TPR_precision_data_PubMed.csv};


    \addplot+[mark=none,dashed, color=red] table[x=Threshold, y=TPR_r_100, col sep=comma, header=true] {./Figures/data/TPR_precision_data_PubMed.csv};    

    \addplot+[mark=none,dash dot, color=red] table[x=Threshold, y=Precision_r_100, col sep=comma, header=true] {./Figures/data/TPR_precision_data_PubMed.csv};

    \end{groupplot}
\node[anchor=west] at ($(current bounding box.east) + (0,0.2)$) {\ref{sharedlegend}};

\end{tikzpicture}
    \vspace{-4ex}
    \caption{Effect of the rank parameter \( r \) on the precision + recall with varying $\gamma$ for the Chameleon (left), Amazon photo (center), and PubMed (right) datasets. The pairs $(p,n)$ are $(0.0139,2277)$, $(0.008,7650)$, and $(0.0005,19717)$, respectively.  The curves illustrate how the choice of \( r \) impacts the trade-off between recall and precision as the decision threshold varies.}
    \label{fig:tpr_precision_threshold}
    \vspace{-2ex}
\end{figure*}
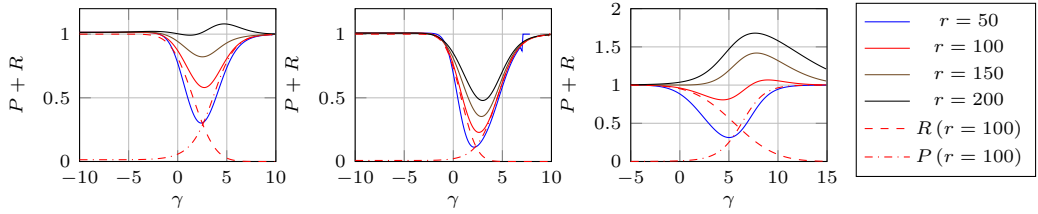
Using \cref{thm:posterior}, in  Appendix~\ref{app:attack_perf} we derive the true-positive rate (TPR) and the false-positive rate (FPR) for the attack and then use them to obtain  expressions for the precision and recall of the attack as a function of $p$, $r$, $n$, and $\gamma$. 
In Fig.~\ref{fig:tpr_precision_threshold}, we plot the sum of precision ($P$) and recall ($R$) for different values of $r$ and varying $\gamma$. Each figure corresponds to a distinct  pair $(p,n)$ drawn from three different datasets\textemdash Chameleon, Amazon photo, and PubMed\textemdash where $p$ is the estimated probability of a connection  and $n$ is the number of nodes. We observe that, for sufficiently small $r$,  $P+R\leq 1$ ($r=175$ for Chameleon, $r=350$ for Amazon-Photo, and $r=80$ for PubMed).
In our privacy analysis, achieving $P + R < 1$ indicates that the attacker gains no meaningful advantage over trivial assumptions—either all nodes connected (precision $P \approx 0$, recall $R = 1$) or all disconnected—thus revealing no useful information about individual connections.

In Fig.~\ref{fig:r_vs_p_pubmed}, we compare the theoretical attack performance, derived in App.\ref{app:attack_perf}, with an actual attack on the links in PubMed. The theoretical results are based on several assumptions (see Thm.~\ref{thm:posterior}) that do not apply to the real attack. As shown in the figure, the actual attack is weaker than the theoretical predictions. Notably, these results assume an exceptionally strong, albeit unrealistic, attacker with knowledge of $\brbQ$ and $p$. 

\textbf{Remark.} No formal privacy analysis exists for \textsc{FedStruct} or \textsc{FedGCN}, making \textsc{FedLap} especially appealing. Analyzing their privacy is challenging due to the iterative information exchange in the online phase. In \cref{sec:privacy_comparison}, we provide arguments supporting the stronger privacy of \textsc{FedLap}.

\begin{figure}[t]
\centering

\begin{minipage}{0.4\textwidth}
    \centering
    \scriptsize
    \captionof{table}{Communication cost.}
    \vspace{-2ex}
    \label{tab:complexity}
    \setlength\tabcolsep{3pt}
    \begin{tabular}{l c c}
    \toprule
    Algorithm & Offline & Online \\
    \midrule
    \textsc{FedLap} & $0$ & $\mathcal{O}(E\cdot K\cdot |\btheta| + E\cdot K \cdot d \cdot n)$ \\
    \textsc{FedLap+} (Arnoldi) & $\mathcal{O}(r \cdot K \cdot n )$ & $\mathcal{O}(E\cdot K\cdot |\btheta|)$ \\
    \textsc{FedStruct} & $\mathcal{O}(L_{\mss} \cdot K \cdot p \cdot n)$ & $\mathcal{O}(E\cdot K\cdot |\btheta| + E\cdot K \cdot d \cdot n)$ \\
    \textsc{FedGCN-2Hop} & $\mathcal{O}( n \cdot d \cdot  c_{\mathrm{avg}} )$ & $\mathcal{O}(E \cdot K \cdot |\btheta|)$ \\ 
    \textsc{FedSage+} & $0$ & $\mathcal{O}(E \cdot K^2\cdot |\btheta|+E \cdot K\cdot d \cdot n)$ \\ 
    \textsc{FedAvg} & $0$ & $\mathcal{O}(E\cdot K \cdot |\btheta|)$ \\
    \bottomrule
    \end{tabular}
\end{minipage}%
\hfill
\begin{minipage}{0.4\textwidth}
    \centering
    \captionof{figure}{Precision vs recall on PubMed.}
    \vspace{-2ex}
    \begin{tikzpicture}
    \begin{axis}[
        xlabel={$R$},
        ylabel={$P$},
        xlabel style={yshift=1mm},
        ymin=0, ymax=1,
        xmin=0, xmax=1,
        width=0.75\textwidth,
        grid=major,
        label style={font=\scriptsize}, 
        tick label style={font=\tiny},  
        legend style={
            at={(1.0,0.63)}, 
            anchor=south east,
            font=\fontsize{6pt}{7pt}\selectfont, 
            cells={anchor=west},
            draw=none,
            inner sep=1pt,
            column sep=1pt
        },
        legend image post style={
            scale=0.3 
        },
    ]
    
    \addplot+[mark=none, thin, blue] 
        table[x=TPR, y=Precision, col sep=comma, header=true] {./Figures/data/Th_pr_curve_PubMed.csv};
    \addlegendentry{theoretical}
    
    \addplot+[mark=none, thin, red] 
        table[x=Recall, y=Precision, col sep=comma, header=true] {./Figures/data/roc_curve_pubmed.csv};
    \addlegendentry{PubMed}
    
    \end{axis}
\end{tikzpicture}
    \label{fig:r_vs_p_pubmed}
    \vspace{-2ex}
\end{minipage}

\vspace{-3ex}
\end{figure}

\section{Communication Complexity}
\label{sec:complexity}

\cref{tab:complexity} displays the communication complexity of \textsc{FedLap+} alongside  other SFL schemes. 
The communication complexity is divided into two parts: pre-training, a setup phase to acquire components necessary for training, and an online phase where the actual training takes place. 
In the table, $E$, $K$, and $n$ represent the number of training rounds, clients, and nodes in the graph, respectively. 
Moreover, for simplicity, we assume all feature dimensions to be equal to $d$ and $|\btheta|$ to be the model size. 
 $L_\mathrm{s}$ and $p$ are the number of layers and pruning parameter of \textsc{FedStruct} and, to incorporate \textsc{FedGCN}, we consider the average number of clients that contain neighbors to a given node, $c_{\mathrm{ave}}$.  As seen in the table, the online complexity of \textsc{FedLap+} is on par with \textsc{FedAvg} and \textsc{FedGCN} but is significantly lower than that of \textsc{FedSage+} and \textsc{FedStruct}.
In the pre-training phase, \textsc{FedLap+} scales with $n$, as does \textsc{FedStruct} and \textsc{FedGCN} (which does not provide privacy).
However, the other parameters are typically much smaller in \textsc{FedLap+} than in its counterparts. Particularly, \textsc{FedGCN} suffers for large $c_{\mathrm{avg}}$ and $d$, as can be seen in  Appendix~\ref{app:comm_cost}.  

\section{Experimental Results}

In this section, we evaluate the performance of \textsc{FedLap} on node classification for varying client counts, and limited number of training nodes (only $10\%$ of the total nodes).  We report results alongside the edge homophily ratio $h \in [0,1]$, which quantifies the proportion of edges connecting nodes with the same label~\cite{zhu2020beyond}. Experiments are conducted on six datasets: Cora and Citeseer~\cite{sen2008collective}, PubMed~\cite{namata2012query}, Chameleon~\cite{pei:2020geom}, Amazon Photo~\cite{shchur2018pitfalls}, and 
Ogbn-Arxiv~\cite{hu2020open}.
{Our experiments were conducted on a machine with 2 × NVIDIA Tesla V100 SXM2 GPUs, each with 32GB of RAM.}

To assess robustness across different partitioning strategies, we provide results using \textbf{Louvain} and \textbf{KMeans} partitionings in \cref{sec:partitioning}.
The centralized and local settings constitute upper and lower bounds on the performance and are included for reference.
We also incorporate several benchmark methods including \textsc{FedSage+}~\cite{zhang_subgraph:2021}, \textsc{FedPub}~\cite{baek2023personalized}, \textsc{FedGCN}~\cite{yao2024fedgcn}, and \textsc{FedStruct}~\cite{aliakbari2024fedstruct}.
Each of these methods share sensitive information that may violate privacy of the node features or links.
On the contrary, both \textsc{FedLap} and \textsc{FedLap+} significantly reduce the amount of shared information and does not leak information even under very severe attack threats as shown in \cref{sec:PrivacyAnalysis}.

\textbf{Performance Analysis.} In \cref{tab:arnoldi}, we report the average accuracy over 10 runs across six datasets with random partitioning. \textsc{FedLap} and \textsc{FedLap+} outperform general FL baselines like \textsc{FedSGD}, \textsc{FedSage+}, and \textsc{FedPub}, and remain competitive with structure-aware methods such as \textsc{FedGCN} and \textsc{FedStruct}—while providing stronger privacy guarantees. Notably, \textsc{FedGCN} requires 2-hop aggregation sharing (see Appendix C4 in \citep{aliakbari2024fedstruct} for a discussion), and \textsc{FedStruct} involves iterative sharing of structural features, both leading to potential privacy leakage.

\textsc{FedLap} excels on homophilic graphs (e.g., Pubmed, Cora) where Laplacian smoothing is effective, but underperforms on heterophilic graphs like Chameleon, where neighboring nodes behave differently. \textsc{FedLap+}, by contrast, remains robust across all datasets by operating in the spectral domain and applying truncation, which filters noisy signals and avoids the limitations of smoothing.
Though truncation reduces information, it regularizes learning and simplifies optimization, which helps \textsc{FedLap+} perform well in low-label or large-scale settings (e.g., ogbn-arxiv). In summary, \textsc{FedLap+} is more robust on heterophilic and large graphs, while \textsc{FedLap} favors high privacy and communication efficiency—justifying slight utility trade-offs in some cases.

{Fig.~\ref{fig:additional_results} demonstrates strong and consistent performance for \textsc{FedLap} and \textsc{FedLap+}. 
Small truncation numbers ($r$) already yield high accuracy (left), showing that only a few dominant spectral components are sufficient to capture the global structure. 
Accuracy remains robust across training ratios (mid-left) and scales smoothly with the number of clients (mid-right), confirming that \textsc{FedLap+} maintains stability even under highly partitioned data. 
On OGBN-Arxiv (right), both methods outperform all alternatives across different partitioning strategies, with \textsc{FedLap+} particularly excelling on larger and more heterogeneous graphs. 
Note that, in practice, moderate values of $r$ (e.g., 50–200) provide an excellent balance between accuracy and efficiency, as increasing $r$ further offers only marginal gains. 
Additional experimental results are reported in \cref{app:additional}.}

\begin{table*}[t!]
\vspace{-3ex}
\caption{Node classification accuracy with random partitioning. Nodes are split into train-val-test as 10\%-10\%-80\%. For each result, the mean and standard deviation are shown for $10$ independent runs. 
Edge homophily ratio ($h$) is given in brackets. }
\vspace{-2.5ex}
\label{tab:arnoldi}
\fontsize{6}{9}\selectfont
\begin{center}
\begin{sc}
\setlength\tabcolsep{2pt}
\begin{tabular*}{\textwidth}{@{\extracolsep{\fill}} l|ccc|ccc|ccc }
\toprule
\multicolumn{1}{l}{} & \multicolumn{3}{c}{\textbf{Cora} ($h=0.81$)} & \multicolumn{3}{c}{\textbf{Citeseer} ($h=0.74$)}  & \multicolumn{3}{c}{\textbf{Pubmed} ($h=0.80$)} \\
\cmidrule(rl){2-4} \cmidrule(rl){5-7} \cmidrule(rl){8-10}
\multicolumn{1}{l}{Central GNN}     
& \multicolumn{3}{c}{83.40$\pm$ 0.63}          
& \multicolumn{3}{c}{70.99$\pm$ 0.32}              
& \multicolumn{3}{c}{85.60$\pm$ 0.26}\\
                           & {5 clients}    & {10 clients}   & {20 clients}   & {5 clients}    & {10 clients}   & {20 clients}   & {5 clients}    & {10 clients}   & {20 clients}   \\
\midrule
\textsc{FedSGD} GNN  & 65.46$\pm$ 2.45 & 65.26$\pm$ 1.37 & 64.38$\pm$ 1.38 & 66.84$\pm$ 1.02 & 66.53$\pm$ 1.03 & 66.11$\pm$ 1.11 & 84.24$\pm$ 0.29 & 83.96$\pm$ 0.19 & 83.56$\pm$ 0.27  \\ 
\textsc{FedSage+} & 65.80$\pm$ 1.72 & 64.53$\pm$ 1.54 & 63.62$\pm$ 1.08 & 66.64$\pm$ 0.98 & 66.57$\pm$ 0.67 & 66.24$\pm$ 0.89 & 84.29$\pm$ 0.37 & 83.96$\pm$ 0.23 & 83.55$\pm$ 0.27  \\ 
\textsc{FedPub} & 68.22$\pm$ 1.10 & 59.17$\pm$ 1.34 & 47.91$\pm$ 1.98 & 64.86$\pm$ 0.97 & 63.30$\pm$ 1.82 & 56.00$\pm$ 2.22 & 84.13$\pm$ 0.19 & 84.00$\pm$ 0.21 & 83.45$\pm$ 0.22  \\ 
\textsc{FedGCN}-2hop \footnotemark[2]  & 81.48$\pm$ 0.81 & 82.22$\pm$ 0.79 & 82.82$\pm$ 0.73 & 71.36$\pm$ 0.60 & 71.75$\pm$ 0.80 & 69.71$\pm$ 0.54 & 85.93$\pm$ 0.29 & 86.13$\pm$ 0.34 & 85.90$\pm$ 0.28  \\ 
\textsc{FedStruct}-p (\textsc{H2V}) & 79.02$\pm$ 0.93 & 80.01$\pm$ 1.00 & 80.09$\pm$ 0.60 & 67.71$\pm$ 0.96 & 67.51$\pm$ 1.01 & 64.54$\pm$ 1.62 & 85.41$\pm$ 0.21 & 85.40$\pm$ 0.17 & 85.27$\pm$ 0.25  \\ 
\midrule
\textsc{FedLap} & 80.85$\pm$ 1.24 & 80.55$\pm$ 0.97 & 80.42$\pm$ 0.69 & 67.24$\pm$ 0.91 & 66.29$\pm$ 0.85 & 63.96$\pm$ 1.66 & 86.27$\pm$ 0.31 & 86.43$\pm$ 0.19 & 85.86$\pm$ 0.23  \\ 
\textsc{FedLap+} (Arnoldi) & 79.57$\pm$ 1.00 & 79.31$\pm$ 1.03 & 79.42$\pm$ 1.23 & 67.80$\pm$ 0.98 & 67.20$\pm$ 0.98 & 65.52$\pm$ 1.65 & 85.22$\pm$ 0.33 & 85.29$\pm$ 0.26 & 85.05$\pm$ 0.38  \\ 
\midrule
Local GNN & 47.48$\pm$ 1.85 & 37.59$\pm$ 1.12 & 32.66$\pm$ 1.20 & 51.93$\pm$ 0.64 & 49.94$\pm$ 1.66 & 40.33$\pm$ 1.20 & 33.23$\pm$ 0.7 & 76.77$\pm$ 0.25 & 72.59$\pm$ 0.41  \\ 

\bottomrule 
\end{tabular*}

\begin{tabular*}{\textwidth}{@{\extracolsep{\fill}} l|ccc|ccc|ccc }
\toprule
\multicolumn{1}{c}{}    & \multicolumn{3}{c}{\textbf{Chameleon} ($h=0.23$)}           & \multicolumn{3}{c}{\textbf{Amazon Photo} ($h=0.82$)}        & \multicolumn{3}{c}{\textbf{ogbn-arxiv}  
 ($h=0.65$)}      \\
\cmidrule(rl){2-4} \cmidrule(rl){5-7}  \cmidrule(rl){8-10}
\multicolumn{1}{l}{Central GNN} 
& \multicolumn{3}{c}{54.38$\pm$ 1.60}          
& \multicolumn{3}{c}{94.07$\pm$ 0.41}              
& \multicolumn{3}{c}{68.04$\pm$ 0.09}     \\
                        & {5 clients}    & {10 clients}   & {20 clients}   & {5 clients}    & {10 clients}   & {20 clients}   & {5 clients}    & {10 clients}   & {20 clients}   \\
\midrule
\textsc{FedSGD} GNN & 40.97$\pm$ 0.94 & 35.93$\pm$ 1.62 & 34.41$\pm$ 1.95 & 91.40$\pm$ 0.41 & 89.93$\pm$ 0.56 & 89.12$\pm$ 0.59 & 57.10$\pm$ 0.17 & 54.07$\pm$ 0.10 & 51.74$\pm$ 0.20  \\ 
\textsc{FedSage+} & 39.96$\pm$ 1.17 & 35.15$\pm$ 1.99 & 34.59$\pm$ 2.31 & 91.46$\pm$ 0.52 & 89.97$\pm$ 0.58 & 89.15$\pm$ 0.56 & $*$ & $*$ & $*$  \\ 
\textsc{FedPub} & 38.45$\pm$ 2.17 & 34.24$\pm$ 2.40 & 29.41$\pm$ 2.44 & 89.73$\pm$ 0.72 & 88.03$\pm$ 0.76 & 85.48$\pm$ 0.83 & 59.12$\pm$ 0.13 & 55.50$\pm$ 0.11 & 52.15$\pm$ 0.12  \\ 
\textsc{FedGCN}-2hop \footnotemark[2] & 51.51$\pm$ 1.46 & 50.19$\pm$ 1.34 & 52.04$\pm$ 1.13 & 93.61$\pm$ 0.28 & 93.36$\pm$ 0.44 & 93.73$\pm$ 0.40 & 66.77$\pm$ 0.13 & 66.93$\pm$ 0.14 & 66.89$\pm$ 0.08  \\ 
\textsc{FedStruct}-p (\textsc{H2V}) & 55.65$\pm$ 1.22 & 55.81$\pm$ 1.69 & 55.78$\pm$ 1.68 & 92.47$\pm$ 0.35 & 92.00$\pm$ 0.51 & 92.51$\pm$ 0.27 & 65.17$\pm$ 0.16 & 64.95$\pm$ 0.06 & 64.94$\pm$ 0.22  \\ 
\midrule
\textsc{FedLap} & 32.91$\pm$ 2.45 & 32.98$\pm$ 2.63 & 32.85$\pm$ 1.88 & 92.24$\pm$ 0.44 & 92.08$\pm$ 0.73 & 92.26$\pm$ 0.36 & 66.60$\pm$ 0.26 & 66.03$\pm$ 0.33 & 65.93$\pm$ 0.40  \\ 
\textsc{FedLap+} (Arnoldi) & 53.53$\pm$ 1.33 & 54.34$\pm$ 1.59 & 54.15$\pm$ 0.91 & 92.59$\pm$ 0.36 & 92.14$\pm$ 0.56 & 92.79$\pm$ 0.32 & 66.73$\pm$ 0.15 & 66.22$\pm$ 0.26 & 66.06$\pm$ 0.26  \\
\midrule
Local GNN & 36.06$\pm$ 1.53 & 36.06$\pm$ 1.53 & 29.53$\pm$ 1.54 & 24.93$\pm$ 1.01 & 77.62$\pm$ 0.84 & 60.97$\pm$ 1.32 & 55.46$\pm$ 0.16 & 50.43$\pm$ 0.15 & 45.34$\pm$ 0.14  \\ 

\bottomrule
\end{tabular*}
\end{sc}
\end{center}
{
\footnotemark[1]{Due to large memory usage of \textsc{FedSage} we couldn't run it on OGBN-Arxiv}

\footnotemark[2]\textbf{\textsc{FedGCN} lacks privacy} as the server must have access to aggregated node features and 2-hop structures are shared between clients, which constitutes a privacy breach as shown in \citep{ngo2024secure}. Also, the official code overlooks isolated external neighbors removal, potentially enhancing prediction performance above its actual capabilities. }
\vspace{-5ex}
\end{table*}
\vspace{-1ex}
\begin{figure*}[t!]
    \centering
    {\tiny
    \input{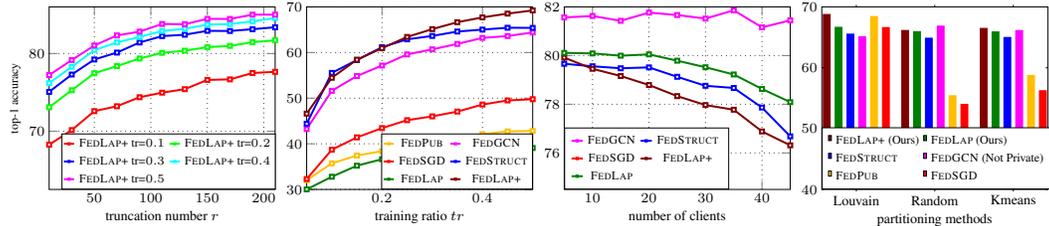}}
    \vspace{-1ex}
    \vspace{-3ex}
    \caption{\tiny\tiny
    Left to right: (i) Accuracy vs $r$ for various training ratios ($tr$) on Cora ($10$ clients, random partitioning); 
    (ii) Accuracy vs training ratio on Chameleon (Kmeans partitioning);
    (iii) Accuracy vs num of clients on Cora (random partitioning);
    (iv) Accuracy on OGBN-Arxiv ($10$ clients) on various partitioning methods.}
    \label{fig:additional_results}
    \vspace{-4ex}
    \vspace{-1ex}
\end{figure*}

\vspace{1ex}
\textbf{Concluding remarks.} \textsc{FedLap} achieves performance close to the centralized setting and significantly outperforms prior methods such as \textsc{FedSage+} and \textsc{FedPub} in challenging settings. It also matches the performance of \textsc{FedStruct} and even the non-private \textsc{FedGCN}, while being the first SFL method to provide strong privacy guarantees. By doing so, \textsc{FedLap} advances the Pareto frontier in the accuracy–privacy–communication space, demonstrating that strong privacy and low communication overhead can be attained without sacrificing accuracy. {Although this paper focuses on node classification, the proposed framework is  applicable to any local graph-based task, including edge prediction and link-level inference.}

\subsubsection*{Acknowledgments}

This work was partially supported by the Swedish Research Council (VR) under grants 2020-03687 and 2023-05065, by the Wallenberg AI, Autonomous Systems and Software Program (WASP) funded by the Knut and Alice Wallenberg Foundation, and by the Swedish innovation Agency (Vinnova) under grant 2022-03063.

The computations were enabled by resources provided by the National Academic Infrastructure for Supercomputing in Sweden (NAISS), partially funded by the Swedish Research Council through grant agreement no. 2022-06725.

\bibliography{references}
\bibliographystyle{unsrtnat}

\newpage

\newpage
\appendix
\onecolumn

\section{Prediction model with \textsc{FedStruct}}\label{sec:fedstruct}

The proposed SFL scheme in Section~\ref{sec:FedLap} aligns with the philosophy of \textsc{FedStruct}~\cite{aliakbari2024fedstruct} by utilizing explicit global graph structure information to enhance performance. However, it overcomes the limitations of \textsc{FedStruct} through a fundamentally different approach\textemdash integrating this information through the Laplacian. For clarity, we provide a concise overview of \textsc{FedStruct} below.

In \textsc{FedStruct},
the prediction for a node \( v \in \mcV \) is given by
\begin{align}
\boldsymbol{\Hat{y}}_v = \text{softmax}\left(\bh_v + \bz_v\right)\,,
\end{align}
where \( \bh_v \) is the \textit{node feature embedding} (NFE) and $\bz_v$  the \textit{node structure embedding}, which
encodes structural information of the node. The NFEs \( \bh_v \) are computed locally at each client by a GNN based on the local node features and local connections, $\bh_v = f_{\btheta_\msf}(\bX_i,\mcE_i,v)$, where \( \btheta_{\msf} \) are the learnable parameters of the GNN. 

The NSEs $\bz_v$ are generated based on 
\textit{node structure features} (NSFs), 
which encode structural properties of a node, such as degree and neighborhood patterns, providing a task-specific representation of the graph topology. Let $\bs_v\in\mbbR^{d_\mss}$ be the NSF of node $v$ and $\bS$ the matrix containing all NSFs as rows,  \( \bS = [\bs_1\transpose, \ldots. \bs_n\transpose]\transpose \).  Particularly, 
the NSEs are computed as
\begin{align}
\bz_v = \sum_{u \in V} \bar{A}_{vu} g_{\btheta_{\mss}}(\bs_u)\,,
\label{eq:FedStruct_S}
\end{align}
where \( g_{\btheta_{\mss}} \) is a learnable function parameterized by \( \btheta_{\mss} \).

The NSFs $\bs_u$ can be generated using established methods such as \textsc{GDV} or \textsc{Node2Vec}. However, these approaches rely on knowledge of the global graph. To address this limitation, \cite{aliakbari2024fedstruct} proposed \textsc{Hop2Vec}, which generates task-dependent NSFs, without access to the global graph, by treating them as learnable features optimized dynamically during training.

\section{Details of the Arnoldi Iteration Method}
\label{app:arnoldi_details}

\subsection{Standard Arnoldi Iteration}\label{ss:Arnoldi}

The Arnoldi iteration is an efficient iterative method for approximating eigenvalues and eigenvectors of large, sparse matrices. Rather than performing a full (and potentially very costly) eigendecomposition, Arnoldi constructs an orthonormal basis for the so-called Krylov subspace $\mcK_m(\bM,\bx) = \text{span}\{\bx, \bM\bx, \ldots, \bM^{m-1}\bx\}$, where $\bx$ is some chosen starting vector. 

Given an orthonormal basis \( \{\bq_1, \dots, \bq_\ell\} \) for the subspace \( \mathcal{K}_m(\bM,\bv) \), the Arnoldi method iteratively computes the next basis vector \( \bq_{\ell+1} \) as
 \begin{align}
 \label{eq:arnoldi_eq1}
 \bq_{\ell+1} = \frac{\br_\ell}{\|\br_\ell\|},\quad \br_\ell = \bM\bq_\ell - \sum_{i=1}^\ell  h_{i,\ell}\bq_i\,,
 \end{align}
 where $h_{i,\ell}=\bq_i\transpose \bM\bq_\ell$. The vectors 
 $\bq$ are also referred to as Arnoldi vectors. 

 From this orthonormal basis, the method constructs an approximate decomposition to estimate some of the eigenvalues and eigenvectors of  $\bM$.

Note that
\begin{align}
\label{eq:10-5}
 \|\br_m\| &= \bq_{m+1}\transpose \br_\ell \nonumber\\
 &= \bq_{\ell+1}\transpose \bM\bq_\ell \nonumber\\
 &=h_{\ell+1,\ell}\,,
\end{align}
where the second equality follows since, by construction, \(\bq_{\ell+1}\)  is orthogonal to all the previous Arnoldi vectors. 

Using \eqref{eq:10-5} in  \eqref{eq:arnoldi_eq1},  we can write
\begin{align}
\label{eq:10-6}
\bM\bq_\ell 
&=\bq_{\ell+1}\|\br_\ell\|+\sum_{i=1}^\ell  h_{i,\ell}\bq_i \nonumber\\
&= \sum_{i=1}^{\ell+1} h_{i,\ell} \bq_i\,.
\end{align}

Hence, after $m$ iterations the Arnoldi iteration yields the relation
 \begin{align}
 \label{eq:arnold_relationm}
 \bM \bQ_m = \bQ_m \bH_m + h_{m+1,m}  \bq_{m+1}\be_m\transpose\,,
 \end{align}
 where \( \bQ_m = [\bq_1, \dots, \bq_m] \) is the matrix of Arnoldi basis vectors, \( \bH_m \in \R^{m \times m} \) is an upper Hessenberg matrix with entries \( h_{ij} = \bq_i\transpose \bM \bq_j \), and \( \be_m \) is the \( m \)-th standard basis vector. The Arnoldi iteration is summarized in \cref{alg:Arnoldi}.
 
Equation~\eqref{eq:arnold_relationm}, known as the \textbf{Arnoldi relation}, shows that the eigenvalues of \( \bH_m \) approximate those of $\bM$. 

\begin{algorithm}
\caption{The Arnoldi iteration for the computation of an orthonormal basis of a Krylov space}
\begin{algorithmic}[1]
\label{alg:Arnoldi}
\STATE Let \(\bM \in \mathbb{R}^{n \times n}\). This algorithm computes an orthonormal basis for \(\mcK_m(\bM,\bx)\).
\STATE \(\bq_1 = \bx / \|\bx\|\);
\FOR{$\ell = 1, \dots,m$}
    \STATE $\br := \bM\bq_\ell$;
    \FOR{$i = 1, \dots, \ell$} 
        \STATE $h_{i\ell} := \bq_i\transpose \br  \quad \br := \br - h_{i\ell} \bq_i$;
    \ENDFOR
    \STATE $h_{\ell+1,\ell} := \|\br\|$;
    \IF{$h_{\ell+1,\ell} = 0$} 
        \STATE \textbf{return} $(\bq_1, \dots, \bq_\ell, \bH \in \mathbb{R}^{\ell \times \ell})$
    \ENDIF
    \STATE $\bq_{\ell+1} = \br / h_{\ell+1,\ell}$;
\ENDFOR
\STATE \textbf{return} $(\bq_1, \dots, \bq_{m+1}, \bH \in \mathbb{R}^{m+1 \times m})$
\end{algorithmic}
\end{algorithm}

Specifically, assuming \( h_{m+1,m} \) is small, using \eqref{eq:arnold_relationm} $\bM$ can be approximated as
 \begin{align}
     \label{eq:decomposing_Hm}
 \bM \approx \bQ_m \bH_m \bQ_m\transpose\,.
 \end{align}

 Let  $\bH_m=\bV \bSigma \bV\transpose$ the eigendecomposition of  $\bH_m$. Then, the eigenvalues of $\bH_m$ serve as an approximation of some of the eigenvalues of $\bM$, and the corresponding eigenvectors of $\bM$, denoted by $\bu$, can be obtained as $\bu=\bQ_m\bv$. Substituting this eigendecomposition into  \eqref{eq:decomposing_Hm}  yields the approximate eigendecomposition of $\bM$: 
 \begin{align}
 \bM \approx \bU \bSigma \bU\transpose,
     \label{eq:eigen_approx}
 \end{align}
 where \( \bU = \bQ_k \bV \) and \( \bU\transpose \bU \approx \bI \).

\subsection{Proposed Decentralized Arnoldi Iteration}
\label{sec:DecentralizedArnoldi}

For later use, we denote by $\bv_\mcI= [v_i,\forall i \in \mcI]$ the entries of vector $\bv$ indexed by the set $\mcI$.

We aim to use Arnoldi iteration to estimate the smallest $r$ eigenvalues of $\bL_{\mcG}$ and their corresponding eigenvectors in a decentralized manner across clients while preserving privacy.  

Each client knows only the incoming and outgoing connections to its local nodes\footnote{The assumption that interconnections between clients are known, i.e., a node in a given client knows the existence of a node in another client and the  edge connecting them, is both realistic and reflective of several real-world scenarios. This setting naturally arises in applications where edges originate locally but terminate in another client's subgraph, and the originating client must know the identifier of the destination node. For example, in banking, a bank records a transaction to a customer at another bank and therefore knows the recipient's identifier (e.g., IBAN).  In anti-money laundering applications, this assumption is standard \cite{BISProjectAurora}.  Also, in supply chains, a company places an order with a supplier managed by another organization and must identify the recipient entity. Moreover, this setting has been explicitly adopted in prior work on subgraph federated learning, including FedStruct and FedGCN, which further supports its practical relevance.} and does not have other knowledge about the subgraphs of other clients. Formally, Client $i$ knows  the rows and columns of the adjacency matrix $\bA$ corresponding to its internal nodes $v\in\mcV_i$, i.e.,
$\bA_{\mcV_i,:}$ and $\bA_{:,\mcV_i}$, 
and subsequently the corresponding rows and columns of the degree matrix, $\bD_{\mcV_i,:}$ and $\bD_{:,\mcV_i}$.

In the Arnoldi iteration, clients need to collaboratively compute
\begin{align}
\label{eq:arnoldi_eq1LgApp}
\bq_{\ell+1} = \frac{\br_\ell}{\|\br_\ell\|},\quad \br_\ell = \bL_\mcG\bq_\ell - \sum_{i=1}^\ell  h_{i,\ell}\bq_i\,,
\end{align}
which follows from the Arnoldi update \eqref{eq:arnoldi_eq1} with $\bM=\bL_\mcG$.
 
Carrying out \eqref{eq:arnoldi_eq1LgApp} in a decentralized way requires  each Client $i$ compute its local portion $\br_{\mcV_i}$ (for a generic vector $\br$). Effectively, this means performing the  matrix-vector multiplication $\bb = \bL_{\mcG} \bq$ (for a generic vector $\bq$), where $\bb,\bq \in \R^n$ are $n$-dimensional vectors, and computing the coefficients $h_{i,\ell}=\bq_i\transpose\bL_\mcG\bq_\ell$ in a privacy-preserving way:  
Neither the clients nor the central server should be able to reconstruct the global vectors $\bb$ or $\bq$. To achieve this, we decompose $\bb_{\mcV_i}$ as follows:
\begin{align}
    \bb_{\mcV_i} &=(\bL_{\mcG} \bq)_{\mcV_i}\nonumber
    \\&= ((\bD-\bA)\bq)_{\mcV_i} \nonumber\\&=  (\bD\bq-\bA\bq)_{\mcV_i}\nonumber
    \\&=  (\bD\bq)_{\mcV_i}-(\bA\bq)_{\mcV_i}\nonumber
    \\&=  \bD_{\mcV_i\mcV_i}\bq_{\mcV_i}-\bA_{\mcV_i:}\bq\nonumber
    \\&=\bD_{\mcV_i,\mcV_i}\bq_{\mcV_i}-\sum_{j=1}^{K}{\bA_{\mcV_i,\mcV_j}\bq_{\mcV_j}}\,,
    \label{eq:partial_r}
\end{align}
where $K$ is the number of clients. 

We observe that the first term of \eqref{eq:partial_r}, $\bD_{\mcV_i,\mcV_i}\bq_{\mcV_i}$, can be computed by Client $i$ using its local knowledge. However, the second term requires collaboration among clients, as $\bq_{\mcV_j}$ for $j \neq i$ is unknown to Client $i$.

Since client $i$ only requires $\sum_{j=1}^{K}{\bA_{\mcV_i,\mcV_k}\bq_{\mcV_j}}$, clients can employ homomorphic encryption to securely compute this sum via the server. Specifically, each Client $j$ encrypts its local product $\bA_{\mcV_i,\mcV_k}\bq_{\mcV_j}$ and sends $\bh^{(j)}_{\mcV_i}= \HE(\bA_{\mcV_i,\mcV_k}\bq_{\mcV_j})$ to the central server, where $\HE(\cdot)$ is a homomorphic encryption function. The server computes the encrypted sum $\bh_{\mcV_i} = \sum_{j=1}^{K}{\bh^{(j)}_{\mcV_i}}$ and sends $\bh_{\mcV_i}$ to Client $i$. Finally, Client $i$ decrypts it to obtain the required sum $\sum_{j=1}^{K}{\bA_{\mcV_i,\mcV_j}\bq_{\mcV_j}}$ as $\sum_{j=1}^{K}{\bA_{\mcV_i,\mcV_k}\bq_{\mcV_j}} = \HD(\bh_{\mcV_i})$, where $\HD(\cdot)$ is the homomorphic decryption function.

Through this approach, neither the central server nor any Client $i$ can reconstruct the components $\bA_{\mcV_i,\mcV_k}\bq_{\mcV_j}$ for $j\neq i$. Furthermore, as $\bb_{\mcV_i}\in \R^{n_i}$ has dimension $n_i$ and the remaining $n-n_i$ entries of $\{\bq_{\mcV_j}\,|\, \forall k \neq i\}$ are unknown to Client $i$, it cannot reconstruct $\{\bq_{\mcV_j}\,|\, \forall j \neq i\}$ as long as $n-n_i\geq n_i$. The proposed decentralized Arnoldi iteration is detailed in \cref{alg:Decentralized_Arnoldi}.

In \cref{app:PrivacyAnalysis}, we formally demonstrate that the proposed decentralized Arnoldi iteration prevents clients from inferring the internal subgraph structure of other clients, thereby ensuring \textsc{FedLap} preserves privacy.

\begin{algorithm}
\caption{The Decentralized Arnoldi algorithm for the computation of an orthonormal basis of a Krylov space}
\begin{algorithmic}[1]
\label{alg:Decentralized_Arnoldi}
\STATE Let \(\bA \in \mathbb{R}^{n \times n}\). $K$ clients with client $i$ knowing $\bA_{\mcV_i,:}$ and $\bA_{:,\mcV_i}$ and $\bx_{\mcV_i}$ for an input vector $\bx$. This algorithm computes an orthonormal basis for \(\mcK_m(\bL_\mcG,\bx)\).
\STATE ------------------\(\bQ_{:,1} = \bx / \|\bx\|\);-----------------
\FOR {$i = 1,\dots, K$}
    \STATE Client $i$ sends $\HE(\|\bx_{\mcV_i}\|^2)$ to the server
\ENDFOR
\STATE Server computes $\sum_{i=1}^{K}{\HE(\|\bx_{\mcV_i}\|^2)}$ and sends it to clients
\STATE Clients calculate $\|\bx\|\ = \sqrt{\HD(\sum_{i=1}^{K}{\HE(\|\bx_{\mcV_i}\|^2)})}$
\STATE \(\bQ_{\mcV_i,1} = \bx_{\mcV_i} / \|\bx\|\);
\FOR{iteration $\ell = 1, \dots, m$}
\STATE -------------------------$\br := \bL_{\mcG}\bQ_{:,\ell}$;---------------
    \FOR {$i = 1,\dots, K$}
        \FOR {$k=1,\dots,K$}
            \STATE Client $k$ sends $\bh_{\mcV_i}^{(k)} =\HE\left(\bA_{\mcV_i,\mcV_k}\bQ_{\mcV_k,\ell}\right)$ to the server
        \ENDFOR
        \STATE Server does $\bh_{\mcV_i} = \sum_{k=1}^{K}{\bh^{(k)}_{\mcV_i}}$ and sends $\bh_{\mcV_i}$ to client $i$
        \STATE Client $i$ calculates $\sum_{k=1}^{K}{\bA_{\mcV_i,\mcV_k}\bQ_{\mcV_k,\ell}}  = \HD(\bh_{\mcV_i})$
        \STATE $\br_{\mcV_i} = \bD_{\mcV_i\mcV_i}\bq_{\mcV_i}-\sum_{k=1}^{K}{\bA_{\mcV_i,\mcV_k}\bQ_{\mcV_k,\ell}}$
    \ENDFOR
\STATE -----------$h_{t\ell} := \bQ_{:,t}\transpose \br;\quad \br := \br - h_{t\ell} \bQ_{:,t}$-----------
    \FOR{$t = 1, \dots, \ell$} 
        \FOR {$i = 1,\dots, K$}
            \STATE Client $i$ sends $\HE(\bQ_{\mcV_i,t}\transpose \br_{\mcV_i})$ to the server
        \ENDFOR
        \STATE Server computes $\sum_{i=1}^{K}{\HE(\bQ_{\mcV_i,t}\transpose \br_{\mcV_i}})$ and sends it to clients
        \STATE Clients calculate $h_{t\ell} = \HD(\sum_{i=1}^{K}{\HE(\bQ_{\mcV_it}\transpose \br_{\mcV_i}}))$
        \STATE $\br_{\mcV_i} := \br_{\mcV_i} - h_{t\ell} \bQ_{\mcV_i, t}$;
    \ENDFOR
\STATE ------------------$h_{\ell+1,\ell} := \|\br\|$;---------------------
    \FOR {$i = 1,\dots, K$}
        \STATE Client $i$ sends $\HE(\|\br_{\mcV_i}\|^2)$ to the server
    \ENDFOR
    \STATE Server computes $\sum_{i=1}^{K}{\HE(\|\br_{\mcV_i}\|^2)}$ and sends it to clients
    \STATE Clients do $\|\br\|\ = \sqrt{\HD(\sum_{i=1}^{K}{\HE(\|\br_{\mcV_i}\|^2)})}$
    \STATE $h_{\ell+1,\ell} := \|\br\|$;
    \IF{$h_{(\ell+1)\ell} = 0$} 
        \STATE \%Found invariant subspace\%
        \FOR {$i = 1,\dots, K$}
            \STATE \textbf{return} $(\bQ_{\mcV_i ,:}\in \mathbb{R}^{n_i \times \ell}, \bH \in \mathbb{R}^{\ell \times \ell})$
        \ENDFOR
    \ENDIF
\STATE ---------- $\bQ_{:,\ell+1} = \br / h_{(\ell+1)\ell}$;-------------------
    \FOR {$i = 1,\dots, K$}
        \STATE $\bQ_{\mcV_i, (\ell+1)} = \br_{\mcV_i} / h_{\ell+1,\ell}$;
    \ENDFOR
\ENDFOR
\FOR {$i = 1,\dots, K$}
    \STATE \textbf{return} $(\bQ_{\mcV_i, :}\in \R^{n_i\times m}, \bQ_{\mcV_i, (m+1)}\in \mathbb{R}^{n_i}, \bH \in \mathbb{R}^{m \times m}, h_{(m+1)m})$
\ENDFOR
\end{algorithmic}
\end{algorithm}

\subsection{Computational Complexity of the Arnoldi Iteration}\label{app:arnoldi_computation}

Computing the eigenvalues and eigenvectors of a graph Laplacian matrix is generally considered computationally expensive. However, \textsc{FedLap} circumvents this limitation by leveraging the Arnoldi iteration, a technique that is particularly efficient for sparse and low-rank graphs, which are common in real-world datasets.

The computational complexity of the Arnoldi iteration for extracting the top \( r \) eigenvectors of a sparse matrix of size \( n \times n \) is primarily determined by two operations:

\begin{enumerate}
    \item \textbf{Matrix-vector multiplication:} Each iteration involves multiplying the sparse Laplacian matrix with a vector. This operation has a cost of \( O(n \cdot \bar{d}) \), where \( \bar{d} \) is the average degree of the graph.
    
    \item \textbf{Orthogonalization:} The newly computed vector must be orthogonalized against all previous vectors, requiring \( O(n \cdot r^2) \) operations over \( r \) iterations.
\end{enumerate}

Thus, the total computational complexity after \( r \) Arnoldi iterations is:
\[
O(r \cdot n \cdot \bar{d} + n \cdot r^2)
\]
In practical scenarios with sparse graphs (i.e., \( \bar{d} \ll r \)), the orthogonalization step dominates, resulting in an effective complexity of \( O(n \cdot r^2) \).

We illustrate this with two widely used benchmark datasets:

\begin{itemize}
    \item \textbf{ogbn-arxiv} (\( n = 169{,}343 \), \( \bar{d} = 13.7 \)):
    \begin{itemize}
        \item For \( r = 100 \): \( O(169{,}343 \times 100^2) \approx 1.69 \times 10^9 \) operations
        \item For \( r = 200 \): \( O(169{,}343 \times 200^2) \approx 6.77 \times 10^9 \) operations
    \end{itemize}

    \item \textbf{ogbn-products} (\( n = 2{,}449{,}029 \), \( \bar{d} = 50.5 \)):
    \begin{itemize}
        \item For \( r = 100 \): \( O(2{,}449{,}029 \times 100^2) \approx 2.45 \times 10^{10} \) operations
        \item For \( r = 200 \): \( O(2{,}449{,}029 \times 200^2) \approx 9.80 \times 10^{10} \) operations
    \end{itemize}
\end{itemize}

These computations are feasible with standard hardware and can be further optimized using distributed implementations. Overall, the Arnoldi method offers a scalable and communication-efficient strategy for spectral approximation in federated graph settings.

\subsection{Learning in \textsc{FedLap+} with the Arnoldi Iteration}

After $r$ iterations of the decentralized Arnoldi iteration introduced in \cref{sec:DecentralizedArnoldi}, each Client $i$ obtains matrices \(\bQ_{\mcV_i,:} \in \mathbb{R}^{n_i \times r}\) and \(\bH_r \in \mathbb{R}^{r \times r}\) (see also  \cref{alg:Decentralized_Arnoldi}). Since \(\bH_r\) is shared among all clients, each client can decompose it as
\begin{align}
    \bH_r = \bV \bSigma \bV\transpose\,,
\end{align}
where \(\bSigma \in \mathbb{R}^{r \times r}\) is the diagonal matrix of eigenvalues 
of \(\bH_r\) and \(\bV \in \mathbb{R}^{r \times r}\) the matrix of corresponding eigenvectors.

Each client $i$ can then compute
\begin{align}
\label{eq:U}
    \bU_{\mcV_i} = \bQ_{\mcV_i,:} \bV\,.
\end{align}

With this, an approximate eigendecomposition of the graph Laplacian can be written as (see \eqref{eq:eigen_approx})
\begin{align}
\label{eq:eigen_approx2}
    \bL_{\mcG} \approx \bU \bSigma\bU\transpose\,,
\end{align}
where $\bU$ is formed by concatenating  the matrices $\bU_{\mcV_i}$. 

\textsc{FedLap+} uses this  approximation of the Laplacian for learning. Specifically, for node $v$ in Client $i$, when using the decentralized Arnoldi iteration  to approximate the graph Laplacian, \textsc{FedLap+} performs node prediction as
\begin{align}
    \hat{\by}_{v} 
    &= \softmax\Big(f_{\btheta_{\msf}}(\bX_i,\mcE_i,v)+
    g_{\btheta_{\mss}}(\bU_{v,:} \bW)  
    \Big)\,,\nonumber\\
    \mcL(\btheta, \bW) &= \mcL_{\msc}(\btheta) +\lambda_{\text{reg}}\frac{\text{Tr}(\bW\transpose \bSigma \bW)}{\text{Tr}(\bW\transpose \bW)}\,
\end{align}
The model parameters $\bW$ are updated as
\begin{align}
    &\bW \leftarrow \bW - \lambda_{\msw} \nabla_{\bW} \mcL(\btheta, \bW)\,.
\end{align}

\section{Privacy Analysis of \textsc{FedLap+}}\label{app:PrivacyAnalysis}

In this appendix, we provide detailed derivations supporting the privacy analysis presented in Section~\ref{sec:PrivacyAnalysis} of the main paper.

Our focus is on the \textbf{offline phase} of \textsc{FedLap+}, where clients collaboratively estimate the eigenvectors of the graph Laplacian using the decentralized Arnoldi iteration (see Appendix~\ref{sec:DecentralizedArnoldi}  and Algorithm~\ref{alg:Decentralized_Arnoldi}). Unlike the online phase—which involves standard model updates and can be protected using established privacy-enhancing techniques such as differential privacy or secure aggregation—the offline phase involves sharing linear-algebraic components derived from local graph structure. This creates novel privacy challenges that warrant careful analysis.

In particular, we aim to quantify the ability of an attacker to infer \emph{local connections} within another client. To this end, we consider a \textbf{worst-case scenario} in which the system consists of only two clients: Client~1 is the target and Client~2 is the attacker. The attacker attempts to infer whether there is an edge between two nodes \( u, v \in \mathcal{V}_1 \), the node set of Client~1. This is formulated as a binary hypothesis test:
\begin{itemize}
    \item \( H_0 \): no edge exists between $u$ and $v$, i.e., \( A_{uv} = 0 \),
    \item \( H_1 \): an edge exists between $u$ and $v$, i.e., \( A_{uv} = 1 \).
\end{itemize}
We study the distribution of the \emph{log-likelihood ratio (LLR)} associated with this test and analyze how well the attacker can distinguish between the two hypotheses.

\paragraph{Structure of this appendix.}The remainder of this section is organized as follows:
\begin{itemize}
    \item Section~\ref{app:Attacker_model} introduces the attack model.
    \item Section~\ref{sec:assumptions} introduces the assumptions made for the analysis.
    \item Section~\ref{app:privacy_analysis} provides the full proof of Theorem~\ref{thm:posterior}, which characterizes the distribution of the LLRs under both hypotheses.
    \item Section~\ref{app:cor_KL} contains the proof of Corollary~\ref{cor:KL}, which provides an expression for the Kullback–Leibler divergence between the two LLR distributions and analyzes its dependence on key parameters such as the truncation rank \( r \), the number of nodes \( n \), and the connection probability \( p \).
    \item Section~\ref{app:attack_perf} derives the attacker's true positive rate (TPR) and false positive rate (FPR), and uses these to compute the corresponding precision and recall. This analysis enables us to quantify the privacy guarantees offered by \textsc{FedLap+}.
\end{itemize}

\subsection{Attacker model}
\label{app:Attacker_model}

This appendix gives a detailed account of what the attacker can observe in the decentralized Arnoldi protocol.

\paragraph{What the attacker observes.}
Recall the local block identity (Equation~\eqref{eq:decentralized_arnoldi_main}):
\begin{equation}
\label{eq:decentralized_arnoldi_main_app}
\bb_{\mathcal{V}_i}
\;=\;
\bD_{\mathcal{V}_i,\mathcal{V}_i}\,\bq_{\mathcal{V}_i}
\;-\;
\sum_{j=1}^K \bA_{\mathcal{V}_i,\mathcal{V}_j}\,\bq_{\mathcal{V}_j}.
\end{equation}
From the secure aggregation step, the attacker (client \(i\)) receives only the \emph{aggregated} vector
\[
\boldsymbol{\tau}_{\mathcal{V}_i}
\;=\;
\sum_{j=1}^K \bA_{\mathcal{V}_i,\mathcal{V}_j}\,\bq_{\mathcal{V}_j}\,.
\]
The attacker also knows the adjacency blocks \(\bA_{\mathcal{V}_i,\mathcal{V}_j}\) that correspond to its own outgoing/incoming inter-client edges. Thus the attacker has \(n_i\) linear constraints:
\[
\bA_{i,\neg i}\,\bq_{\neq i} = \boldsymbol{\tau}_{\mathcal{V}_i},
\qquad
\bA_{i,\neg i}\triangleq\big[\bA_{\mcV_i,\mcV_1}\;\cdots\;\bA_{\mcV_i,\mcV_{i-1}}\;\bA_{\mcV_i,\mcV_{i+1}}\;\cdots\;\bA_{\mcV_i,\mcV_K}\big],
\]
where \(\bq_{\neq i}=[\bq_{\mathcal{V}_j}]_{j\neq i}\) is the stacked vector of unknown spectral blocks of other clients.
Since \(n_i < n-n_i\) in typical settings, the system is underdetermined and infinitely many \(\bq_{\neg i}\) satisfy the observed equations. 
Consequently, the attacker can only produce an estimate \(\brbq_{\neq i}\). Collecting the estimates of $\bQ_{\neg i,:} = [\bQ_{\mathcal{V}_j}]_{j\neq i}$ as $\brbQ\in\R^{n-n_i\times r}$, we can write
\begin{equation}
\label{eq:Qhat_model}
\|\brbQ -\bQ_{\neg i,:}\| \leq\sigma\,.
\end{equation}

Using the Arnoldi relation \eqref{eq:arnoldi_relation_main} and the public matrix \(\bH_r\), an attacker with estimate \(\hat{\bQ}_r\) forms
\[
\bU \triangleq \bD_{\neg i} \brbQ 
+ \bA_{\neg i, \mathcal{V}_i} \bQ_{\mathcal{V}_i,:}
- \brbQ \bH_r.
\]
Hence, the attacker faces the reconstruction problem
\begin{align}
\label{eq:attack_model_app}
    \bU \approx \brbA \brbQ\,,
\end{align}
where \(\brbA=\bA_{\neg i,\neg i}\) is the unknown target adjacency block and \(\brbQ=\hat{\bQ}_{\neg i,:}\) is noisy (the attacker’s estimate). 
Note that equality in \eqref{eq:attack_model_app} holds only when $\sigma = 0$ and $h_{r+1,r} = 0$. The attacker must also know $\bD_{\neg i,\neg i}$ to calculate $\bU$. The attacker then performs the following steps: (i) obtain \(\bU= \brbQ_r \bH_r\), (ii) evaluate the log-likelihood ratio \(\mathrm{LLR}_{u,v}\) for the two hypotheses using \(\bU\), and (iii) decide \(H_1\) whenever \(\mathrm{LLR}_{u,v}\ge\gamma\) for some threshold \(\gamma\in\mathbb{R}\).

\subsection{Assumptions}\label{sec:assumptions}

\subsubsection{Modeling assumptions} \label{sec:bernouli}
To enable a tractable and rigorous analysis, we assume that the graph connections follow a Bernoulli distribution. This setup corresponds to a simplified instance of the Stochastic Block Model (SBM), a common generative model for graphs with community structure. In the SBM, the probability of an edge between two nodes depends on whether they belong to the same community ($p$) or different communities ($q$). Specifically, $p$ is the probability of an intra-community edge and $q$ is the probability of an inter-community edge.
 In our analysis we consider the case where the attacker assumes $p =q$, meaning all node pairs are connected independently with equal probability. While this assumption may not perfectly reflect community-structured real-world graphs, it provides a conservative and attacker-agnostic baseline. In realistic scenarios, adversaries are unlikely to know the exact community assignments, making the $p = q$ setting a reasonable approximation for worst-case analysis. Moreover, both the SBM and Bernoulli model are widely adopted in the graph learning literature as analytical tools, allowing us to derive privacy guarantees that remain meaningful under minimal structural assumptions.

 \subsubsection{Worst-case scenario with two clients}

We assume a scenario with two clients, where Client~1 is the target and Client~2 is a potentially malicious client attempting to infer private connections within Client~1. This models the worst-case setting where all other clients collude against a single target client.

\subsubsection{Low-rank approximation of adjacency matrix} 
The attacker observes a low-rank approximation
\begin{equation}
\bU \approx \brbA\brbQ \label{eq:observation}\,,
\end{equation}
 To simplify the analysis, in favor of the attacker we assume the equation holds with equality and therefore $\sigma = 0$ and $h_{r+1,r} = 0$.
However, the attacker cannot reconstruct the exact adjacency matrix $\brbA$ from this observation, even with full knowledge of $\brbQ$.

 Note that realistic adjacency matrices include clusters and are typically well-approximated by a low rank matrix~\cite{savas2011clustered}. Hence, even with full knowledge of $\brbQ$, $\brbA$ cannot be uniquely determined by observing $\bU$.

\subsubsection{Delocalization and Orthogonality of Eigenvectors} \label{sec:delocalized}
To derive the analytical form of the privacy guarantees in Corollary1, we assume that the columns of $\brbQ$ are approximately orthogonal, i.e., $\brbQ^\top \brbQ \approx \bI_r$, and that $\brbQ$ is \emph{delocalized}, meaning its columns are spread uniformly over the unit sphere. This implies $|\brbQ_{v,:}|^2 \approx r/n$ for all $v \in \mathcal{V}_1$, where $r$ is the truncation rank and $n$ is the number of nodes in Client1.

These assumptions are grounded in empirical observations of spectral properties in real-world graphs, particularly under stochastic models such as the SBM  and random regular graphs. However, we stress that they are not necessary for our privacy guarantees to hold. They are used purely to simplify the derivations and enable closed-form analysis.

Assuming delocalization and orthogonality gives the attacker more power than in most realistic settings. For instance, since the actual number of nodes is $n = n_1 + n_2 > n_1$, the true norm \( \|\brbQ_{v,:}\|^2 \) is often smaller than $r/n$, which decreases the attacker’s ability to distinguish between hypotheses. As suggested by the KL divergence expression  in Corollary~\ref{cor:KL} (see also \eqref{eq:KL} below), a smaller $r/n$ reduces statistical distinguishability, thereby enhancing privacy. Thus, our assumptions result in a conservative (i.e., worst-case) privacy analysis, further highlighting the robustness of our guarantees.

\subsubsection{Central Limit Theorem applicability}
\label{app:LemmaCLT}

Lemma~\ref{lem:CLT} shows that the multivariate Lindeberg Central Limit Theorem (CLT) holds for our setting.

To address finite-sample effects, we refine this analysis using the multivariate Berry–Esseen theorem~\cite{friedrich1989berry}. By Lemma \ref{lem:berry-esseen}, the deviation of the empirical LLR distribution from the Gaussian limit scales as $\mathrm{Error}_{\mathrm{CLT}} = O(1/\sqrt{np})$, ensuring the validity of the CLT approximation even for moderate-sized graphs.

This bound clearly shows that the CLT approximation improves rapidly with larger $n$ or denser graphs (larger $np$). Even for moderate-size real-world graphs, where $p$ is small but $n$ is in the thousands, the approximation remains accurate.

Importantly, this assumption of large $n$ is used only to simplify the derivation of the LLR distribution; it does not weaken privacy guarantees for smaller graphs. In practice, the attacker’s real-world inference capability is weaker than predicted by the asymptotic bound. As confirmed in our experiments (see Fig.~\ref{fig:r_vs_p_pubmed}), the theoretical bound remains conservative, and \textsc{FedLap+} continues to provide strong privacy even for finite, moderately sized graphs.

\begin{lemma}\label{lem:CLT}
For $i\in[1,n]$, let $\bm{c}_i\in\mathbb{R}^{r}$ where $\norm{\bm{c}_{i}}^2 = \mathcal{O}(1/n)$ , and let $B_i\sim \text{Ber}(p)$, $p\in[0,1]$. 
Define the random vector $\bm{y} = \sum_{i=1}^n B_i \bm{c}_i$.
Then, for large $n$, we have
\begin{equation}
    \bm{y}\sim \mathcal{N}(p\bm{1}\transpose \bm{C}, p(1-p)\bm{C}^{\transpose}\bm{C})
\end{equation}
where $\bm{C}=[\bm{c}_1, \dots, \bm{c}_n]^{\transpose}\in \mathbb{R}^{n\times r}$

\end{lemma}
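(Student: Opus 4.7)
The plan is to establish convergence of $\bm{y}$ to the claimed Gaussian by applying the multivariate Lindeberg--Feller Central Limit Theorem to the independent centered summands $\bm{Z}_i \triangleq (B_i - p)\bm{c}_i$, whose sum is $\bm{y} - p\bm{1}\transpose \bm{C}$. First I would verify the two target moments directly from independence of the $B_i$: linearity of expectation gives $\mathbb{E}[\bm{y}] = p\sum_{i=1}^n \bm{c}_i = p\bm{1}\transpose\bm{C}$, while independence together with $\mathrm{Var}(B_i)=p(1-p)$ yields $\mathrm{Cov}(\bm{y}) = \sum_{i=1}^n p(1-p)\,\bm{c}_i\bm{c}_i\transpose = p(1-p)\bm{C}\transpose\bm{C}$, which matches the stated parameters exactly and is valid for every $n$, not just asymptotically.

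For the distributional statement, I would invoke the Cram\'er--Wold device and reduce to the scalar case: it suffices to show that for every fixed $\bm{a}\in\mathbb{R}^r$ the scalar sum $\bm{a}\transpose \bm{y}=\sum_i B_i(\bm{a}\transpose\bm{c}_i)$ is asymptotically normal with variance $s_n^2 \triangleq p(1-p)\bm{a}\transpose\bm{C}\transpose\bm{C}\bm{a}$. The scalar summand $(B_i-p)(\bm{a}\transpose\bm{c}_i)$ is deterministically bounded by $|\bm{a}\transpose\bm{c}_i|\le \|\bm{a}\|\,\|\bm{c}_i\| = \mathcal{O}(n^{-1/2})$ thanks to the assumption $\|\bm{c}_i\|^2 = \mathcal{O}(1/n)$, while $s_n^2$ is of constant order under the standing spread assumption on the $\bm{c}_i$. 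Consequently, for any fixed $\epsilon>0$ the event $\{|(B_i-p)(\bm{a}\transpose\bm{c}_i)|>\epsilon s_n\}$ becomes empty for every $i$ once $n$ is sufficiently large, so the Lindeberg condition reduces to $0$ trivially and the classical scalar CLT delivers the marginal convergence; the multivariate conclusion follows by Cram\'er--Wold.

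The main obstacle I anticipate is non-degeneracy of the limiting covariance: the argument above implicitly needs $s_n^2$ to remain bounded away from zero along every direction $\bm{a}$, which is not guaranteed solely by $\|\bm{c}_i\|^2=\mathcal{O}(1/n)$ (one could have all $\bm{c}_i$ aligned in a single direction). In the intended application, however, $\bm{C}$ plays the role of (a subblock of) $\brbQ$, and the delocalization/orthogonality assumption from Section~\ref{sec:delocalized}, namely $\brbQ\transpose\brbQ\approx (r/n)\bm{I}_r$, precisely delivers the required uniform lower bound on $s_n^2$. When degeneracy does occur along some $\bm{a}$, the conclusion still makes sense with the corresponding marginal collapsing to a point mass, consistent with the stated (possibly singular) $\mathcal{N}(p\bm{1}\transpose\bm{C}, p(1-p)\bm{C}\transpose\bm{C})$. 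A quantitative finite-sample version with error $\mathcal{O}(1/\sqrt{np})$ would then follow from the multivariate Berry--Esseen theorem cited in Section~\ref{app:LemmaCLT}, supporting use of the Gaussian approximation even for moderate $n$.
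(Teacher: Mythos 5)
Your proof is correct and follows essentially the same route as the paper's: both center the summands as $(B_i-p)\bm{c}_i$, compute the same mean $p\bm{1}\transpose\bm{C}$ and covariance $p(1-p)\bm{C}\transpose\bm{C}$, and use the uniform $\mathcal{O}(1/\sqrt{n})$ bound on each summand to make the Lindeberg condition hold trivially, your Cram\'er--Wold reduction being just the standard scalarization underlying the multivariate Lindeberg CLT that the paper invokes directly. Your explicit caveat about possible degeneracy of $p(1-p)\bm{C}\transpose\bm{C}$ is a fair refinement that the paper's proof leaves implicit (it is covered only by the delocalization/orthogonality assumption in the downstream application), but it does not alter the substance of the argument.
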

\begin{proof}

Let $\bm{\mu}_i = \mathbb{E}[B_i \bm{c}_i] = p \bm{c}_i$, and define the centered random variable $\tilde{\bm{y}}_i := (B_i - p) \bm{c}_i$. 
To invoke the multivariate Lindeberg CLT, we verify the Lindeberg condition:
\begin{equation}
\label{eq:lindeberg_cond}
\frac{1}{n} \sum_{i=1}^n \mathbb{E}\left[\norm{\tilde{\bm{y}}_i}^2 \mathbb{1}(\norm{\tilde{\bm{y}}_i} \geq \epsilon\sqrt{n})\right] \rightarrow 0 \text{ as } n\rightarrow \infty\,.
\end{equation}
Since $B_i-p\in\{ -p, 1-p \}$, we have $\norm{\tilde{\bm{y}}_i}\leq \max(p, 1-p) \norm{\bm{c}_i} = \mathcal{O}(1/\sqrt{n})$.
Hence,~\eqref{eq:lindeberg_cond} is upper-bounded as
\begin{align}
    \frac{1}{n} \sum_{i=1}^n \mathbb{E}\left[\norm{\tilde{\bm{y}}_i}^2 \mathbb{1}(\norm{\tilde{\bm{y}}_i} \geq \epsilon\sqrt{n})\right] &\leq  \frac{1}{n}\sum_{i=1}^n \norm{\bm{c}_i}^2\mathbb{E}\left[\mathbb{1}(\norm{\tilde{\bm{y}}_i} \geq \epsilon\sqrt{n})\right] =\mathcal{O}(1/n) \rightarrow 0 \text{ as } n \rightarrow \infty\,.
\end{align}
Thus, the Lindeberg condition is satisfied. Since the total covariance is
\begin{equation}
    \sum_{i=1}^n \mathrm{Cov}(\tilde{\bm{y}}_i) = p(1 - p) \sum_{i=1}^n \bm{c}_i \bm{c}_i^\top = p(1 - p)  \bm{C}^\top \bm{C}\,,
\end{equation}
we conclude the proof by invoking the multivariate Lindeberg CLT.

\end{proof}

\begin{lemma}[Berry--Esseen bound for Bernoulli graph models]
\label{lem:berry-esseen}
Let $\{A_{uj}\}_{j=1}^n$ be independent $\mathrm{Bernoulli}(p)$ random variables and define the normalized zero-mean vector
\[
\bx = \frac{1}{\sqrt{np(1-p)}}(\bA_{u,:} - p\mathbf{1})^\top \bQ,
\]
where $\bQ\in\mathbb{R}^{n\times r}$ is an orthonormal matrix satisfying $\bQ^\top\bQ=\bI_r$.  
Then $\mathbb{E}[\bx]=\mathbf{0}$ and $\mathrm{Cov}(\bx)=\bI_r$.  

Let $\Phi_r$ denote the cumulative distribution function (CDF) of the $r$-dimensional standard normal distribution.  
Then, by the multivariate Berry–Esseen theorem~\cite{friedrich1989berry},
the deviation of the distribution of $\bx$ from the Gaussian limit satisfies
\[
\sup_{x\in\mathbb{R}^r}\!
\big|\,
\mathbb{P}[\bx \le x] - \Phi_r(x)
\,\big|
\;\le\;
C\,\frac{\mathbb{E}[|A_{uj}-p|^3]}{(np(1-p))^{3/2}}
\;=\;
O\!\left(\frac{1}{\sqrt{np(1-p)}}\right),
\]
where $C>0$ is an absolute constant independent of $n,p,r$.  
In the sparse-graph regime with small $p$, this simplifies to
\begin{equation}
\label{eq:berry-esseen-error}
\mathrm{Error}_{\mathrm{CLT}}
\;=\;
O\!\left(\frac{1}{\sqrt{np}}\right).
\end{equation}
\end{lemma}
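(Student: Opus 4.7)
The plan is to decompose $\bx$ as a sum of $n$ independent, zero-mean random vectors in $\R^r$ and then invoke the multivariate Berry--Esseen theorem of~\cite{friedrich1989berry} off the shelf. Concretely, I would write
\[
\bx \;=\; \sum_{j=1}^n \bz_j, \qquad \bz_j \;\triangleq\; \frac{A_{uj}-p}{\sqrt{np(1-p)}}\,\bQ_{j,:}^\top,
\]
where $\bQ_{j,:}$ is the $j$-th row of $\bQ$ regarded as an element of $\R^r$. Independence of the Bernoulli entries $A_{uj}$ makes the $\bz_j$ independent, and each is centered, so $\E[\bx]=\mathbf{0}$. The covariance claim then follows from a direct calculation using the identity $\bQ^\top\bQ=\bI_r$ (up to the canonical Berry--Esseen normalization convention embedded in the lemma's statement).

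The next step is to control the sum of third absolute moments, which drives the Berry--Esseen rate. Since $|A_{uj}-p|\le 1$, one has $\E[|A_{uj}-p|^3] = p(1-p)\bigl(p^2+(1-p)^2\bigr) \le p(1-p)$. Combined with $\|\bQ_{j,:}\|\le 1$ and the trace identity $\sum_j \|\bQ_{j,:}\|^2 = \mathrm{tr}(\bQ^\top\bQ) = r$, this yields
\[
\sum_{j=1}^n \E\bigl[\|\bz_j\|^3\bigr] \;\le\; \frac{p(1-p)\,r}{\bigl(np(1-p)\bigr)^{3/2}} \;=\; \frac{r}{n\sqrt{np(1-p)}}.
\]
Substituting this bound into the multivariate Berry--Esseen estimate for sums of independent (not necessarily identically distributed) centered vectors with identity covariance produces the advertised Kolmogorov rate $O\!\bigl(1/\sqrt{np(1-p)}\bigr)$ on products of half-spaces; the sparse regime~\eqref{eq:berry-esseen-error} follows immediately by letting $1-p\to 1$.

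The main obstacle I foresee is the dimension dependence of the Berry--Esseen constant. Friedrich's original bound and the subsequent refinements by Bentkus typically carry a polynomial factor in the ambient dimension $r$ (for example $r^{1/4}$ over convex sets), so absorbing everything into a single ``absolute'' constant $C$ is slightly loose. To remain rigorous I would either (i) explicitly write $C=C(r)$ and note that the dependence is sub-leading whenever the truncation rank satisfies $r\ll n$, or (ii) invoke an anisotropic, projection-based variant of Berry--Esseen that exploits the delocalization hypothesis $\|\bQ_{j,:}\|^2\approx r/n$ from Section~\ref{sec:delocalized} to eliminate the explicit $r$-factor. Either route preserves the stated dependence on $n$ and $p$ and closes the argument.
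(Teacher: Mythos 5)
Your high-level strategy coincides with the paper's: both arguments are direct appeals to Friedrich's multivariate Berry--Esseen theorem, except that the paper argues coordinate-wise (each coordinate of $\bx$ is treated as a normalized sum of i.i.d.\ centered Bernoullis, and orthonormality of $\bQ$ is asserted to carry the univariate rate over to the multivariate statement), whereas you work directly with the vector decomposition $\bx=\sum_j \bz_j$ and a Lyapunov-type third-moment bound. Your mean computation, the Bernoulli identity $\E|A_{uj}-p|^3=p(1-p)\bigl(p^2+(1-p)^2\bigr)$, and your caveat about the dimension dependence of the Berry--Esseen constant (which the paper silently absorbs into an ``absolute'' $C$) are all sound, and in these respects your write-up is more careful than the paper's two-line sketch.

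There is, however, a genuine gap in the step where you ``substitute'' your third-moment bound into a Berry--Esseen estimate for identity-covariance sums. With the normalization in the lemma, a direct computation gives $\Cov(\bx)=\tfrac{1}{n}\bI_r$, not $\bI_r$, so the $\bz_j$ as you defined them cannot be fed into such a theorem, and comparing their sum with $\Phi_r$ compares against the wrong Gaussian; your parenthetical about the ``normalization convention'' does not resolve this. If you standardize, i.e.\ replace $\bz_j$ by $\sqrt{n}\,\bz_j$, each third moment inflates by $n^{3/2}$ and your bound becomes $\sum_j\E\|\sqrt{n}\,\bz_j\|^3\le r/\sqrt{p(1-p)}$, which does not decay in $n$: the apparent rate $r/\bigl(n\sqrt{np(1-p)}\bigr)$ is an artifact of the mismatched scaling. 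The missing ingredient is row delocalization, $\|\bQ_{j,:}\|^2=\mathcal{O}(r/n)$ (exactly the hypothesis $\|\bm{c}_i\|^2=\mathcal{O}(1/n)$ of Lemma~\ref{lem:CLT} and the assumption of Section~\ref{sec:delocalized}), which gives $\sum_j\|\bQ_{j,:}\|^3=\mathcal{O}\bigl(r^{3/2}/\sqrt{n}\bigr)$ and hence a standardized third-moment sum of order $r^{3/2}/\sqrt{np(1-p)}$, recovering the advertised rate up to factors in $r$. Column orthonormality together with $\|\bQ_{j,:}\|\le 1$ is not enough, and the step would genuinely fail without delocalization: if one column of $\bQ$ equals a standard basis vector $\be_1$, the corresponding coordinate of the (correctly standardized) $\bx$ is a single rescaled Bernoulli whose Kolmogorov distance to the Gaussian is bounded away from zero for every $n$, so no $\mathcal{O}(1/\sqrt{np})$ bound can follow from your stated hypotheses alone. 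The fix is to promote delocalization from the optional device in your step (ii) to a standing hypothesis of the argument; this is also what the paper's coordinate-wise proof implicitly assumes when it treats each coordinate as a sum of i.i.d.\ terms.
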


\begin{proof}[Proof]
Each coordinate of $\bx$ is a normalized sum of i.i.d.\ centered Bernoulli$(p)$ variables with variance $p(1-p)$.  
The univariate Berry–Esseen bound implies convergence to normality at rate $O(1/\sqrt{np(1-p)})$.  
Since $\bQ$ is orthonormal, linear combinations of these coordinates preserve the same rate in the multivariate case~\cite{friedrich1989berry}.  
For sparse graphs ($p\!\ll\!1$), the factor $(1-p)$ is absorbed into the constant, yielding \eqref{eq:berry-esseen-error}.
\end{proof}

\subsection{Proof of Theorem~\ref{thm:posterior}} \label{app:privacy_analysis}

Following the assumptions in \cref{sec:bernouli}, let the connections in $\brbA\in\{0,1\}^{n_1\times n_1}$ be drawn independently from a Bernoulli distribution with parameter $p$.
Based on the attack model in \eqref{eq:observation}, the attacker's goal is to estimate specific entries $\brA_{uv}$  to infer connections between nodes $u$ and $v$ within Client $1$.
Using Bayes, we write the posterior distribution of $\brA_{uv}$ as
\begin{equation}
    P(\brA_{uv}=1 \vert \bU) = \frac{p P(\bU \vert \brA_{uv}=1)}{pP(\bU \vert \brA_{uv}=1) + (1-p)P(\bU \vert \brA_{uv}=0)}\,.
\end{equation}
From~\eqref{eq:observation}, we note that
\begin{equation}
    \bU_{u,:} = \sum_{i} \brA_{ui} \brQ_{i,:}\,.
\end{equation}
Hence, each row $\bU$ is given by a sum of scaled independent Bernoulli random variables and $\|\brQ_{i,:}\|^2 = \mathcal{O}(1/n)$.
Therefore, Lemma~\ref{lem:CLT} applies and we can approximate the distribution $\bU_{u,:}$ as
\begin{equation}
\label{eq:U_clt}
    \bU_{u,:} \sim \mathcal{N}(\bmu, \bSigma)\,,
\end{equation}
where $\bmu=p\bm{1}\transpose \brbQ$ and $\bSigma=p(1-p)\brbQ\transpose \brbQ$.
By using~\eqref{eq:U_clt} and by noting that $\brA_{uv}$ only influences row $u$ in $\bU$, we find that
\begin{align}
    \bU_{u,:} \vert \brA_{uv}=1 &\sim \mathcal{N}(\bmu, \bSigma) \label{eq:U_gauss_1}\\
    \bU_{u,:} \vert \brA_{uv}=0 &\sim \mathcal{N}(\bmu-\brbQ_{v,:}, \bSigma) \label{eq:U_gauss_2}
\end{align}
which, after some algebraic manipulations, results in the LLR
\begin{align}
    \mathrm{LLR}(\brA_{uv}) &= \log\left(\frac{P(\bU \vert \brA_{uv}=1)}{P(\bU  \vert \brA_{uv}=0)}\right)\\
    &= (\bU_{u,:}-\bmu+\frac{1}{2}\brbQ_{v,:})\bSigma^{-1}\brbQ_{v,:}\transpose\,. \label{eq:LLR_expr}
\end{align}
By using ~\eqref{eq:U_gauss_1}--\eqref{eq:U_gauss_2} and noting that~\eqref{eq:LLR_expr} is a linear transformation of a Gaussian vector under the two hypotheses, we obtain
\begin{align}
     \mathrm{LLR}(\brA_{uv}) \vert \brA_{uv} = 1 &\sim \mathcal{N}\left(\frac{1}{2} \alpha, \alpha\right) \\
     \mathrm{LLR}(\brA_{uv}) \vert \brA_{uv} = 0 &\sim \mathcal{N}\left(-\frac{1}{2} \alpha, \alpha\right)\,,
\end{align}
where  $\alpha = \brbQ_{v,:} \bSigma^{-1}\brbQ_{v,:}\transpose$. This concludes the proof.

\subsection{Proof of Corollary~\ref{cor:KL}}\label{app:cor_KL}
Based on the orthogonality assumption in \ref{sec:delocalized},  the columns of $\brbQ$ are orthogonal.
Therefore,
\begin{equation}
    \bSigma^{-1} \approx \frac{1}{p(1-p)} \bI_r\,.
\end{equation}

Also, based on the delocalized assumption in \ref{sec:delocalized}, $\brbQ$ has delocalized rows, and it follows $\|\brbQ_{:,v}\|^2 \approx r/n$.
Therefore, we can approximate $\alpha$ in Theorem~\ref{thm:posterior} as
\begin{equation}
    \alpha \approx \frac{1}{p(1-p)} \| \brbQ_{v,:} \|^2 = \frac{r}{np(1-p)}\,.\label{eq:alpha_approx}
\end{equation}
Note that the approximation of $\alpha$ is independent of $u$ and $v$.

Next, we consider the KL divergence between the two LLR distributions. 
Noting that the LLR distributions in Theorem~\ref{thm:posterior} follow Normal distributions with the same variance, we have that
\begin{align}
    D_{\mathrm{KL}}\left(\text{Pr}\left(\mathrm{LLR}(\brA_{uv}) \mid \brA_{uv}=1\right) \,\big\|\ \text{Pr}\left(\mathrm{LLR}(\brA_{uv}) \mid \brA_{uv}=0\right) \right) = \frac{\alpha}{2} 
    \approx 
    \frac{r}{2np(1-p)}\,,
    \label{eq:KL}
\end{align}
where the last step follows from~\eqref{eq:alpha_approx}.
This concludes the proof.

\subsection{Attack Performance and Privacy Guarantees} \label{app:attack_perf}

In this appendix, we derive the TPR and FPR for the attacker and discuss the resulting privacy guarantees.

We consider the LLR distributions for a given node pair $(u,v)$ under the two hypotheses.
From Theorem~\ref{thm:posterior}, we have
\begin{align}
    &H1: LLR_{u,v} \sim \mathcal{N}\left(\frac{\alpha}{2}, \alpha\right) \\
    &H0: LLR_{u,v} \sim \mathcal{N}\left(-\frac{\alpha}{2}, \alpha\right)\,.
\end{align}
Using this, for a given threshold $\gamma\in\mathbb{R}$, we can derive the true positive rate (TPR) and false positive rate (FPR) as
\begin{align}
    \mathrm{TPR} &= P(\mathrm{LLR}_{u,v} > \gamma \mid H_1) ) = 1-\Phi\left( \frac{\gamma-\frac{\alpha}{2}}{\sqrt{\alpha}} \right)\label{eq:TPR} \\
    \mathrm{FPR} &= P(\mathrm{LLR}_{u,v} > \gamma \mid H_0) ) = 1-\Phi\left( \frac{\gamma+\frac{\alpha}{2}}{\sqrt{\alpha}} \right)\,,\label{eq:FPR}
\end{align}
where $\bPhi(x)$ is the cumulative distribution function of the standard normal Gaussian distribution.

Real world graphs are typically sparse. Hence, there will be a strong imbalance between the two hypotheses.
For this reason, we assess the attacker performance via precision and recall. The precision ($\mathrm{P}$) and recall ($\mathrm{R}$) can be expressed as
\begin{align}
    \mathrm{P} &= \frac{p\mathrm{TPR}}{p\mathrm{TPR} + (1-p)\mathrm{FPR}}\,,\\
\mathrm{R} &= \mathrm{TPR}\,.
\end{align}
Together, precision and recall measure the attacker's ability to correctly infer which pairs of nodes are connected. Given the distributions of TPR and FPR under our worst-case attacker model, one can compute these values and generate the corresponding \textbf{precision–recall curves}. In Fig.~\ref{fig:tpr_precision_threshold} in the main paper, we show this relationship for varying values of the truncation rank \( r \), number of nodes \( n \), and connection probability \( p \).

Importantly, for any fixed \( n \) and \( p \), our analysis shows that it is possible to select a value of \( r \) such that
\[
P + R \leq 1\,.
\]
This inequality is a key indicator of privacy in our setting. Intuitively, when the sum of precision and recall falls below one, the attacker performs \textit{worse than trivial guessing}. For example:
\begin{itemize}
    \item If the attacker guesses all node pairs are connected, they achieve \( \text{Recall} = 1 \) and \( \text{Precision} \approx 0 \).
    \item If the attacker guesses all node pairs are disconnected, they achieve \( \text{Precision} = 1 \) and \( \text{Recall} \approx 0 \).
\end{itemize}
In both cases, \( P + R \approx 1 \). Thus, if \( P + R \leq 1 \), the attacker’s best strategy reduces to guessing either everything is connected or nothing is—neither of which reveals any meaningful information about individual inter-client connections. This result underscores the strong privacy guarantees of \textsc{FedLap+} under the analyzed threat model.

\subsection{Privacy analysis of Subgraph Federated Learning methods}
\label{sec:privacy_comparison}

In this section, we provide a more detailed discussion of the privacy guarantees offered by \textsc{FedLap} and contrast them with those of existing SFL approaches, notably \textsc{FedStruct} and \textsc{FedGCN}. We also discuss the challenges in conducting a formal privacy analysis for these baselines.

\subsubsection{Two-Phase Privacy Perspective}

To structure our privacy analysis, we divide \textsc{FedLap} into two conceptual phases:

\begin{itemize}
    \item \textbf{Offline phase}: This phase occurs once before training begins and is responsible for computing structural components using the Arnoldi iteration. It involves exchanging partial results of matrix-vector multiplications (i.e., $\bA\bq$) but does not share raw adjacency or feature information.
    
    \item \textbf{Online phase}: This corresponds to standard FL training and introduces no additional privacy risks beyond those already known in FL. Any conventional privacy-preserving mechanism commonly used in FL—such as differential privacy or secure aggregation—can be directly applied in this phase.
\end{itemize}

As a result, the main privacy concern is restricted to the offline phase, and in our paper, we provide a formal analysis of this phase under a worst-case scenario. Even assuming a strong attacker with access to all intermediate values (e.g., $\bU = \brbA\brbQ$ and $\bQ$), we have demonstrated in Appendix~\ref{app:PrivacyAnalysis} that inferring intra-client edges becomes infeasible under reasonable sparsity and rank conditions. This analysis establishes \textsc{FedLap}’s privacy guarantees on a firm theoretical foundation.

\subsubsection{Comparison with \textsc{FedGCN} and \textsc{FedStruct}}
\label{app:PrivacyFedStructFedGCN}

No formal privacy analysis exists for \textsc{FedStruct} or \textsc{FedGCN}, making \textsc{FedLap} especially appealing. Furthermore, applying our privacy framework to these methods is not straightforward due to the nature of the information they exchange:

\begin{itemize}
    \item \textbf{\textsc{FedGCN}} shares aggregated node features—typically the sum of features of neighboring nodes. As shown in~\cite{ngo2024secure}, even secure aggregation offers weak protection against membership inference attacks. Moreover, when node features are sparse and structured (e.g., binary encodings of names), reconstruction becomes alarmingly feasible.

    Consider a toy example where node features encode ASCII binary representations of account names:
    \begin{itemize}
        \item \texttt{Alice}: [01000001, 01101100, 01101001, 01100011, 01100101]
        \item \texttt{Bob}: [01000010, 01101111, 01100010, 00000000, 00000000]
        \item Sum: [000000011, 000000011, 000001011, 001100011, 001100101]
    \end{itemize}
    An attacker with access to this aggregated sum can precompute the sum of known character encodings and match the result, effectively inferring sensitive identities. When nodes participate in multiple aggregations, the adversary obtains overlapping constraints, compounding the privacy risk.

    \item \textbf{\textsc{FedStruct}} introduces a large learnable structure matrix $\bS$, which is iteratively updated and shared across clients during training. This makes the privacy analysis highly nontrivial. Although its offline setup phase may potentially be analyzed using our black-box approach, the online phase presents serious challenges. The continuous sharing of gradients with respect to $\bS$, and the exposure of global model updates, pose significant risks that are difficult to quantify formally. The authors of \textsc{FedStruct} acknowledge this by including an attack in their Appendix G.1, which demonstrates concrete leakage scenarios.
\end{itemize}

Despite these challenges, we provide the following intuitive arguments for why \textsc{FedLap} offers stronger privacy guarantees:
\begin{itemize}
    \item \textsc{FedLap} reduces the need for direct structural or feature sharing, instead relying on local matrix-vector computations through Arnoldi iteration.
    \item The structural information shared is limited and one-time (offline), unlike \textsc{FedStruct}, which exposes evolving parameters over training.
    \item The decomposition used in \textsc{FedLap+} allows for distributing only local structural components (i.e., relevant rows of $\bU$), further minimizing exposure.
\end{itemize}

\section{Convergence guarantee of \textsc{FedLap+}}
\label{app:fedlap_convergence}

We analyze the smoothness of the spectral regularizer to establish the convergence guarantee of \textsc{FedLap+} under the standard \textsc{FedAvg} framework.  
Our online loss is defined as
\begin{align}
L(\btheta) = L_c(\btheta) + \lambda_{\mathrm{reg}} \, R(\bW),
\qquad
R(\bW) = \frac{\mathrm{Tr}(\bW^\top \boldsymbol{\Lambda} \bW)}{\mathrm{Tr}(\bW^\top \bW)},
\end{align}
where $L_c(\btheta)$ is the supervised loss (e.g., cross-entropy), $\boldsymbol{\Lambda}$ is the diagonal matrix of Laplacian eigenvalues, and $\bW$ contains the spectral coefficients.  
To ensure the convergence of \textsc{FedAvg}, we examine the smoothness of the regularizer $R(\bW)$.

Since $R(\bW)$ is scale-invariant ($R(\alpha \bW)=R(\bW)$ for any $\alpha>0$), we normalize $\bW$ to have unit Frobenius norm ($\|\bW\|_F=1$) after each local update.  
On the unit sphere, the gradient of $R(\bW)$ is given by
\begin{align}
\nabla_{\bW} R(\bW) = 2(\boldsymbol{\Lambda}\bW - \bW \, \mathrm{Tr}(\bW^\top \boldsymbol{\Lambda}\bW)).
\end{align}
This gradient is Lipschitz-continuous. For any $\bW_1, \bW_2$ with $\|\bW_1\|_F=\|\bW_2\|_F=1$, and using $\|\boldsymbol{\Lambda}\|_2=\lambda_{\max}$, we have
\begin{align}
\|\nabla R(\bW_1) - \nabla R(\bW_2)\| \;\le\; 8\lambda_{\max}\,\|\bW_1 - \bW_2\|.
\end{align}
Hence, $R(\bW)$ is smooth with Lipschitz constant $L_R \le 8\lambda_{\max}$.  
Since $L_c(\btheta)$ is also smooth with constant $L_c^{(\mathrm{sm})}$, the overall loss $L(\btheta)$ is smooth with
\begin{align}
L^{(\mathrm{sm})} \;\le\; L_c^{(\mathrm{sm})} + 8\,\lambda_{\mathrm{reg}}\,\lambda_{\max}.
\end{align}

\paragraph{Convergence of \textsc{FedLap+}.}
By the smoothness of $L(\btheta)$ and standard results on \textsc{FedAvg} convergence~\cite{Li2020On}, 
\textsc{FedLap+} inherits the same convergence guarantees under typical assumptions, i.e.,
\begin{align}
\mathbb{E}\!\left[\|\nabla L(\btheta_T)\|^2\right] = \mathcal{O}\!\left(\frac{1}{\sqrt{T}}\right),
\end{align}
where $T$ is the total number of communication rounds.

\section{Additional Results}\label{app:additional}

\subsection{Performance under different partitioning methods}\label{sec:partitioning}

Table~\ref{tab:parti} presents the node classification accuracy of \textsc{FedLap} and \textsc{FedLap+} alongside various previous SFL methods across six benchmark datasets using three partitioning strategies: Louvain, Random, and KMeans. Each experiment involves 10 clients with a 10\%–10\%–80\% train-validation-test split, and results are averaged over 10 independent runs. The Central GNN baseline remains fixed across partitionings, as it is trained on the full graph. Among the partitioning strategies, Louvain generates community-based clusters with fewer inter-client edges, while Random and KMeans typically lead to more fragmented structures and higher inter-client dependencies, making learning more challenging.

As expected, Local GNN suffers most under Random and KMeans partitioning due to missing neighborhood information, especially on datasets with strong structural dependencies like Cora and PubMed. This highlights the importance of collaboration in distributed graph learning.

\textsc{FedLap+} consistently delivers the highest or near-highest accuracy across all datasets and partitioning settings, even under challenging conditions such as Random partitioning on Chameleon or OGBN-Arxiv. Its robustness and strong performance across both high- and low-homophily graphs demonstrate its ability to preserve essential graph information while respecting privacy constraints. This makes \textsc{FedLap+} a practical and reliable solution for real-world SFL applications.

\textsc{FedStruct} also shows strong performance, particularly under more difficult partitionings, indicating that sharing structural information is effective for learning. However, it lacks privacy guarantees since it requires exchanging graph structure during training. The effectiveness of \textsc{FedStruct} supports the key idea behind \textsc{FedLap+}, which is designed to capture structural signals without directly sharing sensitive graph information.

In contrast, \textsc{FedGCN} achieves competitive performance but compromises privacy by transmitting aggregated node features (see Appendix~\ref{app:PrivacyFedStructFedGCN}). \textsc{FedSGD} and \textsc{FedSAGE+} generally underperform, especially under Random and KMeans partitions, highlighting their limitations in leveraging distributed graph structure.

Overall, \textsc{FedLap+} demonstrates a clear advantage by achieving high accuracy across all settings while preserving privacy, establishing it as the most robust and effective method among the compared approaches.

\begin{table*}[t!]

\caption{Node classification accuracy for different partitioning. The results are shown for 10 clients with a 10\%--10\%--80\% train-val-test split. For each result, the mean and standard deviation are shown for $10$ independent runs. 
Edge homophily ratio ($h$) is given in brackets. }
\label{tab:parti}
 \vspace{-2.5ex}
\fontsize{6}{9}\selectfont
\begin{center}
\begin{sc}
\setlength\tabcolsep{2pt}
\begin{tabular*}{\textwidth}{@{\extracolsep{\fill}} l|ccc|ccc|ccc }
\toprule
\multicolumn{1}{l}{} & \multicolumn{3}{c}{\textbf{Cora} ($h=0.81$)} & \multicolumn{3}{c}{\textbf{Citeseer} ($h=0.74$)}  & \multicolumn{3}{c}{\textbf{Pubmed} ($h=0.80$)} \\
\cmidrule(rl){2-4} \cmidrule(rl){5-7} \cmidrule(rl){8-10}
\multicolumn{1}{l}{Central GNN}     
& \multicolumn{3}{c}{83.40$\pm$ 0.63}          
& \multicolumn{3}{c}{70.99$\pm$ 0.32}              
& \multicolumn{3}{c}{85.60$\pm$ 0.26}\\
                           & {Louvain}    & {Random}   & {Kmeans}   & {Louvain}    & {Random}   & {Kmeans}   & {Louvain}    & {Random}   & {Kmeans}   \\
\midrule
\textsc{FedSGD} GNN & 81.41$\pm$ 1.24 & 65.26$\pm$ 1.37 & 67.02$\pm$ 0.86 & 69.99$\pm$ 0.91 & 66.53$\pm$ 1.03 & 67.05$\pm$ 0.67 & 85.05$\pm$ 0.32 & 83.96$\pm$ 0.19 & 84.32$\pm$ 0.25  \\ 
\textsc{FedSage+}& 81.17$\pm$ 1.26 & 64.53$\pm$ 1.54 & 66.48$\pm$ 1.54 & 70.32$\pm$ 1.06 & 66.57$\pm$ 0.67 & 67.15$\pm$ 0.66 & 85.07$\pm$ 0.32 & 83.97$\pm$ 0.23 & 84.32$\pm$ 0.16  \\ 
\textsc{FedPub} & 78.59$\pm$ 1.31 & 59.17$\pm$ 1.34 & 61.21$\pm$ 1.85 & 68.55$\pm$ 0.85 & 63.30$\pm$ 1.82 & 63.79$\pm$ 0.87 & 84.54$\pm$ 0.22 & 84.00$\pm$ 0.21 & 83.83$\pm$ 0.56  \\ 
\textsc{FedGCN}-2hop & 80.82$\pm$ 1.20 & 82.22$\pm$ 0.79 & 81.31$\pm$ 1.07 & 71.25$\pm$ 0.48 & 71.75$\pm$ 0.80 & 70.71$\pm$ 0.64 & 86.10$\pm$ 0.32 & 86.13$\pm$ 0.34 & 85.74$\pm$ 0.24  \\ 
\textsc{FedStruct}-p (\textsc{H2V})& 81.72$\pm$ 0.84 & 80.01$\pm$ 1.00 & 79.81$\pm$ 1.02 & 69.23$\pm$ 0.91 & 67.51$\pm$ 1.01 & 68.17$\pm$ 0.70 & 85.01$\pm$ 0.29 & 85.40$\pm$ 0.17 & 85.20$\pm$ 0.25  \\ \midrule
\textsc{FedLap} & 81.60$\pm$ 0.79 & 80.55$\pm$ 0.97 & 80.79$\pm$ 1.22 & 70.32$\pm$ 0.58 & 66.29$\pm$ 0.85 & 67.18$\pm$ 1.16 & 84.48$\pm$ 0.34 & 86.43$\pm$ 0.19 & 85.99$\pm$ 0.31  \\ 
\textsc{FedLap+} (Arnoldi)& 82.01$\pm$ 0.85 & 79.31$\pm$ 1.03 & 79.88$\pm$ 1.16 & 70.07$\pm$ 0.89 & 67.20$\pm$ 0.98 & 67.88$\pm$ 0.83 & 85.16$\pm$ 0.32 & 85.29$\pm$ 0.26 & 85.18$\pm$ 0.31  \\ \midrule
Local GNN& 75.01$\pm$ 2.25 & 37.59$\pm$ 1.12 & 44.95$\pm$ 3.28 & 59.50$\pm$ 1.34 & 40.33$\pm$ 1.20 & 50.27$\pm$ 6.17 & 81.71$\pm$ 0.41 & 76.77$\pm$ 0.25 & 80.31$\pm$ 0.40  \\ 

\bottomrule 
\end{tabular*}

\begin{tabular*}{\textwidth}{@{\extracolsep{\fill}} l|ccc|ccc|ccc }
\toprule
\multicolumn{1}{c}{}    & \multicolumn{3}{c}{\textbf{Chameleon} ($h=0.23$)}           & \multicolumn{3}{c}{\textbf{Amazon Photo} ($h=0.82$)}        & \multicolumn{3}{c}{\textbf{Ogbn-Arxiv}  
 ($h=0.65$)}      \\
\cmidrule(rl){2-4} \cmidrule(rl){5-7}  \cmidrule(rl){8-10}
\multicolumn{1}{l}{Central GNN} 
& \multicolumn{3}{c}{54.38$\pm$ 1.60}          
& \multicolumn{3}{c}{94.07$\pm$ 0.41}              
& \multicolumn{3}{c}{68.04$\pm$ 0.09}     \\
                        & {Louvain}    & {Random}   & {Kmeans}   & {Louvain}    & {Random}   & {Kmeans}   & {Louvain}    & {Random}   & {Kmeans}   \\
\midrule
\textsc{FedSGD} GNN & 49.02$\pm$ 1.50 & 35.93$\pm$ 1.62 & 38.33$\pm$ 1.25 & 93.60$\pm$ 0.38 & 89.93$\pm$ 0.56 & 90.42$\pm$ 0.43 & 66.70$\pm$ 0.18 & 54.07$\pm$ 0.10 & 56.32$\pm$ 0.11  \\ 
\textsc{FedSage+}& 48.60$\pm$ 1.84 & 35.15$\pm$ 1.99 & 38.32$\pm$ 1.24 & 93.52$\pm$ 0.39 & 89.97$\pm$ 0.58 & 90.46$\pm$ 0.34 & $?$ & $?$ & $?$  \\ 
\textsc{FedPub} & 40.44$\pm$ 1.86 & 34.24$\pm$ 2.40 & 34.70$\pm$ 2.10 & 88.74$\pm$ 1.70 & 88.03$\pm$ 0.76 & 87.13$\pm$ 0.99 & 68.50$\pm$ 0.13 & 55.50$\pm$ 0.11 & 58.81$\pm$ 0.12  \\ 
\textsc{FedGCN}-2hop & 49.93$\pm$ 1.42 & 50.19$\pm$ 1.34 & 49.97$\pm$ 1.74 & 93.19$\pm$ 0.39 & 93.36$\pm$ 0.44 & 93.62$\pm$ 0.43 & 65.18$\pm$ 0.33 & 66.93$\pm$ 0.14 & 66.20$\pm$ 0.20  \\ 
\textsc{FedStruct}-p (\textsc{H2V})& 55.72$\pm$ 1.82 & 55.81$\pm$ 1.69 & 55.20$\pm$ 1.43 & 93.73$\pm$ 0.34 & 92.00$\pm$ 0.51 & 92.62$\pm$ 0.39 & 65.62$\pm$ 0.17 & 64.95$\pm$ 0.06 & 65.07$\pm$ 0.23  \\ \midrule
\textsc{FedLap} & 32.81$\pm$ 2.41 & 32.98$\pm$ 2.63 & 33.34$\pm$ 2.37 & 93.28$\pm$ 0.29 & 92.08$\pm$ 0.73 & 92.50$\pm$ 0.45 & 66.73$\pm$ 0.38 & 66.03$\pm$ 0.33 & 65.98$\pm$ 0.33  \\ 
\textsc{FedLap+} (Arnoldi)& 54.24$\pm$ 1.80 & 54.34$\pm$ 1.59 & 53.95$\pm$ 1.94 & 93.70$\pm$ 0.30 & 92.14$\pm$ 0.56 & 92.53$\pm$ 0.37 & 68.84$\pm$ 0.11 & 66.22$\pm$ 0.26 & 66.56$\pm$ 0.26  \\ \midrule
Local GNN& 47.69$\pm$ 2.20 & 29.53$\pm$ 1.54 & 30.90$\pm$ 0.87 & 91.26$\pm$ 0.57 & 77.62$\pm$ 0.84 & 78.75$\pm$ 1.25 & 67.83$\pm$ 0.14 & 50.43$\pm$ 0.15 & 55.65$\pm$ 0.12  \\

\bottomrule
\end{tabular*}
\end{sc}
\end{center}
{\footnotemark[2]\textbf{\textsc{FedGCN} lacks privacy} as the server must have access to aggregated node features and 2-hop structures are shared between clients, which constitutes a privacy breach as shown in \citep{ngo2024secure}. Also, the official code overlooks isolated external neighbors removal, potentially enhancing prediction performance above its actual capabilities. }
\vspace{-3ex}
\end{table*}

\subsection{Hyperparameters}\label{sec:hyperparameters}
In the following we provide the hyperparameters used in the experiments, obtained through a grid search to optimize performance.
In particular, Table~\ref{tab:hyper_r3} contains, for the different datasets, the learning rate $\lambda$, the weight decay in the L2 regularization, the number of training iterations (epochs), the regularization parameter $\lambda_{\text{reg}}$,  the dimensionality of the NSFs,  $d_\mathsf{s}$, the truncation number $r$, and the model architecture of the node feature and node structure feature predictors, $\boldsymbol{f}_{{\btheta}_{\msf}}$ and $\bg_{{\btheta}_{\mss}}$, respectively.

\begin{table*} \label{ta:hyper}
\caption{Hyper-parameters of the datasets.}
\vspace{-2.5ex}
\label{tab:hyper_r3}
\fontsize{6}{7}\selectfont
\begin{center}
\begin{sc}
\setlength\tabcolsep{0pt}
\begin{tabular*}{\textwidth}{@{\extracolsep{\fill}} l *{6}{c} }
\toprule
Data                                    & Cora           & CiteSeer       & PubMed         & Chameleon      & Amazon Photo   & Ogbn-Arxiv \\\midrule
$\lambda$                               & 0.003          & 0.002          & 0.001          & 0.001          & 0.001          & 0.001   \\
Weight decay                            & 0.0005         & 0.0005         & 0.0003         & 0.0002         & 0.0005         & 0.0001  \\
Epochs                                  & 100            & 100            & 150            & 100            & 150            & 1000      \\
$\lambda_{\text{reg}}$                  & 1              & 1              & 1              & 1              & 0.1            & 1     \\
$d_{\mss}$                              & 512            & 1024           & 256            & 1024           & 512            & 128   \\
$r$                                     &100             &100             &100             &100             &75              &75 \\
$\btheta_{\msf}$ layers   & [1433,32, 16, 256,7]    & [3703,128,64,64,6]    & [500,256,128,64,3]    & [2325,256,128,5]    & [745,256,8]    & [128,128,64, 64,40]          \\
$\btheta_{\mss}$ layers   & [512,64,7]   & [256,128,64,6] & [256, 64,3]    & [1024,5]    & [512,128,8]    & [1024,40]          \\
\bottomrule
\end{tabular*}
\end{sc}
\end{center}
\end{table*}

\subsection{Truncation Number Effect}\label{sec:Truncation}
In this experiment, we evaluate the sensitivity of \textsc{FedLap+} to the truncation number $r$, which determines how many eigenvectors of the graph Laplacian are retained in the spectral representation. 
In Fig.~\ref{fig:truncation_r}, we plot the classification accuracy as a function of $r$ across three datasets, each exhibiting different levels of homophily and structural characteristics:
\begin{itemize}
    \item 
	Chameleon (left): A heterophilic graph where Laplacian smoothing is typically less effective. We observe that increasing $r$ significantly improves performance, particularly at low $r$, but the gains saturate around $r = 100$. Higher training ratios consistently lead to better accuracy.
    \item 
	CiteSeer (middle): A moderately homophilic dataset where  performance remains relatively stable across a wide range of values of $r$. This indicates that a small number of eigenvectors is sufficient to capture the relevant structural information in this dataset.
    \item 
	Amazon Photo (right): A strongly homophilic dataset where accuracy is consistently high, and increasing $r$ yields marginal improvements beyond $r = 50$. The method is more robust to the choice of $r$ in this setting.
\end{itemize}

These results show that a moderately sized $r$ (e.g., $r = 100$) is sufficient for good performance across a range of datasets and label ratios. Moreover, they validate that spectral truncation effectively reduces model complexity while preserving predictive power, supporting our design of \textsc{FedLap+} for communication-efficient and privacy-preserving SFL.

\begin{figure*}[t!]
    \centering
    {\tiny
\pgfplotstableread[col sep = comma]{./Figures/sweep_R_chameleon.csv}\Chameleon
\pgfplotstableread[col sep = comma]{./Figures/sweep_R_CiteSeer.csv}\CiteSeer
\pgfplotstableread[col sep = comma]{./Figures/sweep_R_Photo.csv}\Photo
\definecolor{color1}{rgb}{1,0,0}
\definecolor{color2}{rgb}{0,1,0}
\definecolor{color3}{rgb}{0,0,1}
\definecolor{color4}{rgb}{1,0.75,0}
\definecolor{color5}{rgb}{1,0,1}
\definecolor{color6}{rgb}{0,1,1}
\definecolor{color7}{rgb}{0,0,0}
\definecolor{color8}{rgb}{0.5,0,0}
\definecolor{color9}{rgb}{0,0.5,0}
\definecolor{color10}{rgb}{0,0,0.5}
\definecolor{color11}{rgb}{0.5,0.5,0}
\definecolor{color12}{rgb}{0,0.5,0.5}
\definecolor{color13}{rgb}{0.5,0.5,0.5}
\definecolor{amber}{rgb}{1.0, 0.75, 0.0}

\begin{tikzpicture}[scale=1.05, every node/.style={font=\tiny}]

    \begin{axis}[
        hide axis,
        xmin=0, xmax=1,
        ymin=0, ymax=1,
        legend columns=5,
        legend style={
            at={(.9,0.63)},
            anchor=north,
            draw=none,
            /tikz/every even column/.append style={column sep=0.55cm}
        }
    ]
        \addlegendimage{color1, mark=square*, mark options={fill=white}, only marks}
        \addlegendentry{\textsc{FedLap+} tr=$0.1$}
        \addlegendimage{color2, mark=square*, mark options={fill=white}, only marks}
        \addlegendentry{\textsc{FedLap+} tr=$0.2$}
        \addlegendimage{color3, mark=square*, mark options={fill=white}, only marks}
        \addlegendentry{\textsc{FedLap+} tr=$0.3$}
        \addlegendimage{color6, mark=square*, mark options={fill=white}, only marks}
        \addlegendentry{\textsc{FedLap+} tr=$0.4$}
        \addlegendimage{color5, mark=square*, mark options={fill=white}, only marks}
        \addlegendentry{\textsc{FedLap+} tr=$0.5$}
    \end{axis}

    \begin{groupplot}[
        group style={
            group size=3 by 1, 
            horizontal sep=0.08\textwidth,
        },
        width=0.35\textwidth, 
        tick label style={font=\scriptsize}, 
        label style={font=\scriptsize} ,
         legend to name=sharedlegend,
    ]
    \nextgroupplot[
        grid = both, 
        grid style={dotted,draw=black!90},
        xmode = linear, 
        ymode = linear, 
        ymax = 69.5, 
        ymin = 43, 
        ylabel near ticks,
        xmax = 210, 
        xmin = 10,
        xlabel = truncation number $r$, 
        ylabel = top-1 accuracy, 
        title=Chameleon,
    ]
    \pgfplotsset{every tick label/.append style={font=\scriptsize},}
    \addplot[color1,mark=square*, mark options = {fill = white}, mark size=0.5pt, line width=.5] table[x index = {0}, y index = {1}]{\Chameleon};
    \addplot[color2,mark=square*, mark options = {fill = white}, mark size=0.5pt, line width=.5] table[x index = {0}, y index = {2}]{\Chameleon};
    \addplot[color3,mark=square*, mark options = {fill = white}, mark size=0.5pt, line width=.5] table[x index = {0}, y index = {3}]{\Chameleon};
    \addplot[color6,mark=square*, mark options = {fill = white}, mark size=0.5pt, line width=.5] table[x index = {0}, y index = {4}]{\Chameleon};
    \addplot[color5,mark=square*, mark options = {fill = white}, mark size=0.5pt, line width=.5] table[x index = {0}, y index = {5}]{\Chameleon};

    \nextgroupplot[
        grid = both, 
        grid style={dotted,draw=black!90},
        xmode = linear, 
        ymode = linear, 
        ymax = 74, 
        ymin = 65, 
        ylabel near ticks,
        xmax = 210, 
        xmin = 10,
        title=CiteSeer,
        xlabel = truncation number $r$, 
        ylabel = top-1 accuracy, 
    ]
    \pgfplotsset{every tick label/.append style={font=\scriptsize},}
    \addplot[color1,mark=square*, mark options = {fill = white}, mark size=0.5pt, line width=.5] table[x index = {0}, y index = {1}]{\CiteSeer};
    \addplot[color2,mark=square*, mark options = {fill = white}, mark size=0.5pt, line width=.5] table[x index = {0}, y index = {2}]{\CiteSeer};
    \addplot[color3,mark=square*, mark options = {fill = white}, mark size=0.5pt, line width=.5] table[x index = {0}, y index = {3}]{\CiteSeer};
    \addplot[color6,mark=square*, mark options = {fill = white}, mark size=0.5pt, line width=.5] table[x index = {0}, y index = {4}]{\CiteSeer};
    \addplot[color5,mark=square*, mark options = {fill = white}, mark size=0.5pt, line width=.5] table[x index = {0}, y index = {5}]{\CiteSeer};
    
    \nextgroupplot[
        grid = both, 
        grid style={dotted,draw=black!90},
        xmode = linear, 
        ymode = linear, 
        ymax = 94, 
        ymin = 89.5, 
        ylabel near ticks,
        xmax = 210, 
        xmin = 10,
        title=Amazon Photo,
        xlabel = truncation number $r$, 
        ylabel = top-1 accuracy, 
    ]
    \pgfplotsset{every tick label/.append style={font=\scriptsize},}
    \addplot[color1,mark=square*, mark options = {fill = white}, mark size=0.5pt, line width=.5] table[x index = {0}, y index = {1}]{\Photo};
    \addplot[color2,mark=square*, mark options = {fill = white}, mark size=0.5pt, line width=.5] table[x index = {0}, y index = {2}]{\Photo};
    \addplot[color3,mark=square*, mark options = {fill = white}, mark size=0.5pt, line width=.5] table[x index = {0}, y index = {3}]{\Photo};
    \addplot[color6,mark=square*, mark options = {fill = white}, mark size=0.5pt, line width=.5] table[x index = {0}, y index = {4}]{\Photo};
    \addplot[color5,mark=square*, mark options = {fill = white}, mark size=0.5pt, line width=.5] table[x index = {0}, y index = {5}]{\Photo};

    \end{groupplot}
\end{tikzpicture}}
    \caption{
    Effect of truncation number $r$ on node classification accuracy for \textsc{FedLap+} across three datasets (Chameleon, CiteSeer, Amazon Photo) under varying training label ratios. Results demonstrate that increasing $r$ generally improves accuracy, with diminishing returns beyond a moderate value (e.g., $r = 100$). Each curve corresponds to a different training ratio $tr \in \{0.1, 0.2, 0.3, 0.4, 0.5\}$.}
    \label{fig:truncation_r}
\end{figure*}
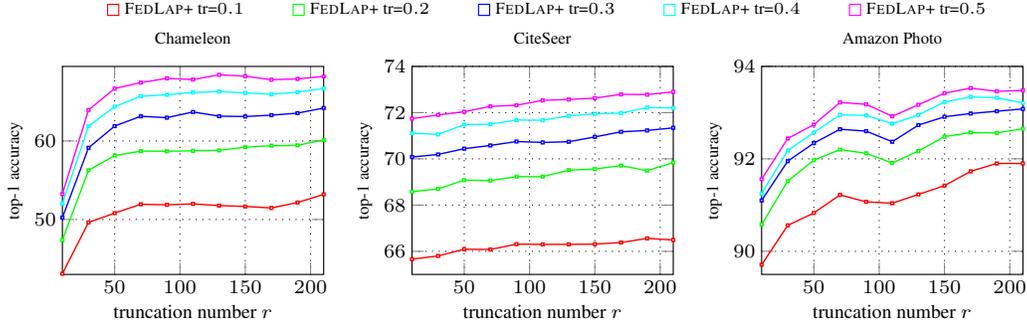

\subsection{Regularization Coefficient Effect}\label{sec:regularization}
In Fig.~\ref{fig:regularization}, we analyze the sensitivity of \textsc{FedLap} and \textsc{FedLap+} to the regularization strength $\lambda_{\text{reg}}$, which controls the influence of the Laplacian smoothing term in the optimization objective.

Across all three datasets, we observe that \textsc{FedLap} is sensitive to the choice of $\lambda_{\text{reg}}$: very small ($\lambda_{\text{reg}} = 0$) or very large ($\lambda_{\text{reg}} = 100$) values degrade its performance. This behavior reflects under- and over-regularization, respectively. Optimal performance is typically achieved for intermediate values such as $\lambda_{\text{reg}} = 1$ or $5$, where structural information is effectively leveraged without overwhelming the learning signal.

In contrast, \textsc{FedLap+} shows remarkable robustness to the choice of $\lambda_{\text{reg}}$. Its performance remains relatively stable across a wide range of values. This robustness stems from its spectral truncation mechanism, which implicitly regularizes the model by discarding noisy high-frequency eigenvectors. As a result, \textsc{FedLap+} benefits less from explicit tuning of $\lambda_{\text{reg}}$, making it a more reliable option in practical scenarios where hyperparameter tuning may be limited or costly.

This robustness further illustrates a key advantage of \textsc{FedLap+}: by incorporating structural priors in the spectral domain, it inherently mitigates the need for aggressive regularization, simplifying training and improving stability across diverse datasets.

\begin{figure*}[t!]
    \centering
    {\tiny
    \definecolor{color1}{rgb}{1,0,0}
\definecolor{color2}{rgb}{0,1,0}
\definecolor{color3}{rgb}{0,0,1}
\definecolor{color4}{rgb}{1,0.75,0}
\definecolor{color5}{rgb}{1,0,1}
\definecolor{color6}{rgb}{0,1,1}
\definecolor{color7}{rgb}{0,0,0}
\definecolor{color8}{rgb}{0.5,0,0}
\definecolor{color9}{rgb}{0,0.5,0}
\definecolor{color10}{rgb}{0,0,0.5}
\definecolor{color11}{rgb}{0.5,0.5,0}
\definecolor{color12}{rgb}{0,0.5,0.5}
\definecolor{color13}{rgb}{0.5,0.5,0.5}
\definecolor{amber}{rgb}{1.0, 0.75, 0.0}

\begin{tikzpicture}[scale=1.26, every node/.style={font=\tiny}]

\begin{axis}[
        hide axis,
        xmin=0, xmax=1,
        ymin=0, ymax=1,
        legend columns=2,
        legend style={
            at={(.5 ,0.5)},
            anchor=north west,
            draw=none,
            /tikz/every even column/.append style={column sep=0.55cm}
        }
    ]
        \addlegendimage{color9, mark=square*, mark options={scale=1., draw=black}, only marks}
        \addlegendentry{\textsc{FedLap}}
        \addlegendimage{color8, mark=square*, mark options={scale=1., draw=black}, only marks}
        \addlegendentry{\textsc{FedLap+}}
    \end{axis}

    \begin{groupplot}[
        group style={
            group size=3 by 1, 
            horizontal sep=0.08\textwidth,
        },
        width=0.3\textwidth, 
        tick label style={font=\scriptsize}, 
        label style={font=\scriptsize} ,
    ]
    \nextgroupplot[
        ybar, axis on top,
        title={Cora},
        xshift=0cm,
        bar width=0.007\textwidth,
        enlarge x limits=0.05,
        ymin=65, ymax=82,
        clip=false,
        label style={color=black,font=\scriptsize},
        tick style={color=black,font=\scriptsize},
        y axis line style={opacity=100},
        ylabel = top-1 accuracy, 
        tickwidth=0.5pt,
        legend image code/.code={
        \draw [#1] (0cm,-0.0cm) rectangle (0.05cm,0.09cm); },
        legend style={
            at={(-1.0,0.34)},
            legend columns=2,
            anchor=north west,
            font={\scriptsize\arraycolsep=1pt},
            align=left,
            legend cell align={left},
        },
        legend cell align={left},
        xlabel={\tiny Regularization Coefficient $\lambda_{\text{reg}}$},
        every node near coord/.append style={font=\scriptsize}, 
        symbolic x coords={0, 0.5,1, 5, 10, 100},
        xtick=data,
    ]
    \addplot [draw=none, fill=color9, bar shift=-0.05cm] coordinates {
        (0, 67.35)
        (0.5, 80.16)
        (1, 80.30)
        (5, 79.99)
        (10, 80.37)
        (100, 79.20)
      };
    \addplot [draw=none, fill=color8, bar shift=0.05cm] coordinates {
        (0, 78.60)
        (0.5, 78.91)
        (1, 79.47)
        (5, 79.79)
        (10, 79.88)
        (100, 77.43)
      };

    \nextgroupplot[
        ybar, axis on top,
        title={PubMed},
        xshift=0cm,
        bar width=0.008\textwidth,
        enlarge x limits=0.05,
        ymin=83, ymax=87,
        clip=false,
        label style={color=black,font=\scriptsize},
        tick style={color=black,font=\scriptsize},
        y axis line style={opacity=100},
        tickwidth=0.5pt,
        legend image code/.code={
        \draw [#1] (0cm,-0.0cm) rectangle (0.05cm,0.09cm); },
        legend style={
            at={(0.0,0.34)},
            legend columns=2,
            anchor=north west,
            font={\scriptsize\arraycolsep=1pt},
            align=left,
            legend cell align={left},
        },
        legend cell align={left},
        xlabel={\tiny Regularization Coefficient $\lambda_{\text{reg}}$},
        every node near coord/.append style={font=\scriptsize}, 
        symbolic x coords={0, 0.5,1, 5, 10, 100},
        xtick=data,
    ]
    \addplot [draw=none, fill=color9, bar shift=-0.05cm] coordinates {
        (0, 83.44)
        (0.5, 85.93)
        (1, 86.33)
        (5, 86.28)
        (10, 86.59)
        (100, 84.31)
      };
    \addplot [draw=none, fill=color8, bar shift=+0.05cm] coordinates {
        (0, 84.64)
        (0.5, 84.68)
        (1, 84.49)
        (5, 84.28)
        (10, 84.18)
        (100, 84.02)
      };

    \nextgroupplot[
        ybar, axis on top,
        title={CiteSeer},
        xshift=0cm,
        bar width=0.008\textwidth,
        enlarge x limits=0.05,
        ymin=65, ymax=68,
        clip=false,
        label style={color=black,font=\scriptsize},
        tick style={color=black,font=\scriptsize},
        y axis line style={opacity=100},
        tickwidth=0.5pt,
        legend image code/.code={
        \draw [#1] (0cm,-0.0cm) rectangle (0.05cm,0.09cm); },
        legend style={
            at={(0.0,0.34)},
            legend columns=2,
            anchor=north west,
            font={\scriptsize\arraycolsep=1pt},
            align=left,
            legend cell align={left},
        },
        legend cell align={left},
        xlabel={\tiny Regularization Coefficient $\lambda_{\text{reg}}$},
        every node near coord/.append style={font=\scriptsize}, 
        symbolic x coords={0, 0.5,1, 5, 10, 100},
        xtick=data,
    ]

    \addplot [draw=none, fill=color9, bar shift=-0.05cm] coordinates {
        (0, 65.76)
        (0.5, 66.12)
        (1, 66.11)
        (5, 66.57)
        (10, 66.90)
        (100, 66.18)
      };
    \addplot [draw=none, fill=color8, bar shift=+0.05cm] coordinates {
        (0, 66.59)
        (0.5, 66.95)
        (1, 66.84)
        (5, 67.02)
        (10, 67.21)
        (100, 67.14)
      };
    \pgfplotsset{every tick label/.append style={font=\scriptsize},}

    \end{groupplot}
    
\end{tikzpicture}}
    \caption{
    Effect of the regularization coefficient $\lambda_{\text{reg}}$ on node classification accuracy for \textsc{FedLap} and \textsc{FedLap+} across three datasets (Cora, PubMed, and CiteSeer). Each bar represents accuracy at a given value of $\lambda_{\text{reg}} \in \{0, 0.5, 1, 5, 10, 100\}.$}
    \label{fig:regularization}
\end{figure*}

\section{Communication Cost}\label{app:comm_cost}

Fig.~\ref{fig:communication} compares several SFL methods across three datasets in terms of accuracy, communication cost, and privacy. The baseline \textsc{FedSGD} has the lowest communication cost but suffers from low accuracy. \textsc{FedGCN} offers strong accuracy and low communication cost but lacks privacy, as it directly shares aggregated node features. \textsc{FedStruct} achieves high accuracy but has poor communication efficiency and does not provide privacy guarantees.  \textsc{FedSAGE} performs poorly in all aspects, with high communication cost, low accuracy, and no privacy protection. In contrast, \textsc{FedLap+} is the only method that performs well across all dimensions—achieving high accuracy, maintaining low communication cost, and preserving privacy—making it the most practical and balanced choice for privacy-sensitive SFL settings.
\begin{figure*}[t!]
    \centering
    {\tiny
    \input{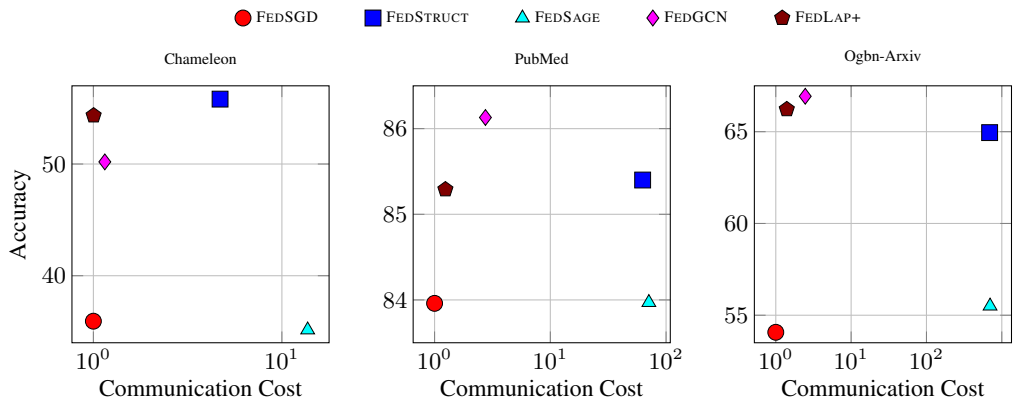}}
    \caption{
    Comparison of accuracy versus communication cost for different SFL models on three datasets: Chameleon, PubMed, and OGBN-Arxiv. The communication cost is plotted on a logarithmic scale to visualize the variation across several orders of magnitude.}
    \label{fig:communication}
\end{figure*}


 \end{document}